\definecolor{lightgray}{RGB}{242, 242, 242}
\newcommand{\mdp}{\mathcal{M}}
\newcommand{\Aspace}{\mathcal{A}}
\newcommand{\Sspace}{\mathcal{S}}
\newcommand{\Hspace}{\mathcal{H}}
\newcommand{\Reals}{\mathbb{R}}
\newcommand{\indicator}{\mathbb{I}}
\newcommand{\cmp}{\mathcal{M}}
\newcommand{\gmdp}{\cmp_{F}}
\newcommand{\F}{f}
\newcommand{\J}{\mathcal{J}}
\newcommand{\C}{\mathcal{C}}
\newcommand{\V}{\mathcal{V}}
\DeclareMathOperator*{\EV}{\mathbb{E}}
\DeclareMathOperator*{\Pinm}{\Pi_{NM}}
\DeclareMathOperator*{\PimNs}{\Pi_{M}^{NS}}
\DeclareMathOperator*{\PimS}{\Pi_{M}^S}
\newcommand{\GTO}{\textsc{\small{GTO}}\xspace}
\newcommand{\GPO}{\textsc{\small{GPO}}\xspace}
\newcommand{\GRL}{\textsc{\small{GRL}}\xspace}
\newcommand{\BP}{\textsc{\small{BP}}\xspace}
\newcommand{\RL}{\textsc{\small{RL}}\xspace}
\newcommand{\MDP}{\textsc{\small{MDP}}\xspace}
\newcommand{\CMP}{CMP\xspace}
\newcommand{\GMDP}{\textsc{\small{GMDP}}\xspace}
\newcommand{\mdpSolver}{\textsc{\small{MdpSolver}}\xspace}
\newcommand{\R}{\mathbb{R}}
\newcommand{\T}{\mathcal{T}}
\newcommand{\Rmath}{\mathcal{R}}
\newcommand{\mypar}[1]{\textbf{#1.}}
\DeclareRobustCommand{\ie}{i.e.,\@\xspace}
\DeclareRobustCommand{\eg}{e.g.,\@\xspace}
\DeclareRobustCommand{\wrt}{w.r.t.\@\xspace}
\declaretheorem[numberwithin=section]{thm}
\declaretheorem[sibling=thm]{lemma}
\declaretheorem[]{challenged assumption}
\declaretheorem[]{definition}
\declaretheorem[]{proposition}
\newcounter{relctr} 
\everydisplay\expandafter{\the\everydisplay\setcounter{relctr}{0}} 
\newcommand\labelrel[2]{%
  \begingroup
    \refstepcounter{relctr}%
    \stackrel{\textnormal{(\arabic{relctr})}}{\mathstrut{#1}}%
    \originallabel{#2}%
  \endgroup
}
\Crefname{figure}{Fig.}{Fig.}
\definecolor{myviolet}{rgb}{0.6, 0.4, 0.8}
\newsavebox{\algGTO}
\DeclareMathOperator*{\argmax}{arg\,max}
\icmltitlerunning{Global Reinforcement Learning: Beyond Linear and Convex Rewards via Submodular Semi-gradient Methods}
\begin{document}

\twocolumn[
\icmltitle{Global Reinforcement Learning: Beyond Linear and Convex Rewards via Submodular Semi-gradient Methods}



\icmlsetsymbol{equal}{*}

\begin{icmlauthorlist}
\icmlauthor{Riccardo De Santi}{equal}
\icmlauthor{Manish Prajapat}{equal}
\icmlauthor{Andreas Krause}{}
\end{icmlauthorlist}

\icmlcorrespondingauthor{Riccardo De Santi}{rdesanti@ethz.ch}

\icmlkeywords{Machine Learning, ICML}

\vskip 0.3in
]



\printAffiliationsAndNotice{\icmlEqualContribution\!. All authors are from ETH Zurich and are affiliated with the ETH AI Center} 

\begin{abstract}
\looseness -1 In classic Reinforcement Learning (RL), the agent maximizes an additive objective of the visited states, \eg a value function. Unfortunately, objectives of this type cannot model many real-world applications such as experiment design, exploration, imitation learning, and risk-averse RL to name a few. This is due to the fact that additive objectives disregard interactions between states that are crucial for certain tasks. To tackle this problem, we introduce \emph{Global} RL (GRL), where rewards are \emph{globally} defined over trajectories instead of \emph{locally} over states. Global rewards can capture \emph{negative interactions} among states, \eg in exploration, via submodularity,  \emph{positive interactions}, \eg synergetic effects, via supermodularity, while mixed interactions via combinations of them. By exploiting ideas from submodular optimization, we propose a novel algorithmic scheme that converts any GRL problem to a sequence of classic RL problems and solves it efficiently with curvature-dependent approximation guarantees. We also provide hardness of approximation results and empirically demonstrate the effectiveness of our method on several GRL instances.
\end{abstract}
\section{Introduction}
\label{sec:introduction}
\begin{figure}[t]
    \centering
    \includegraphics[width=0.47\textwidth]{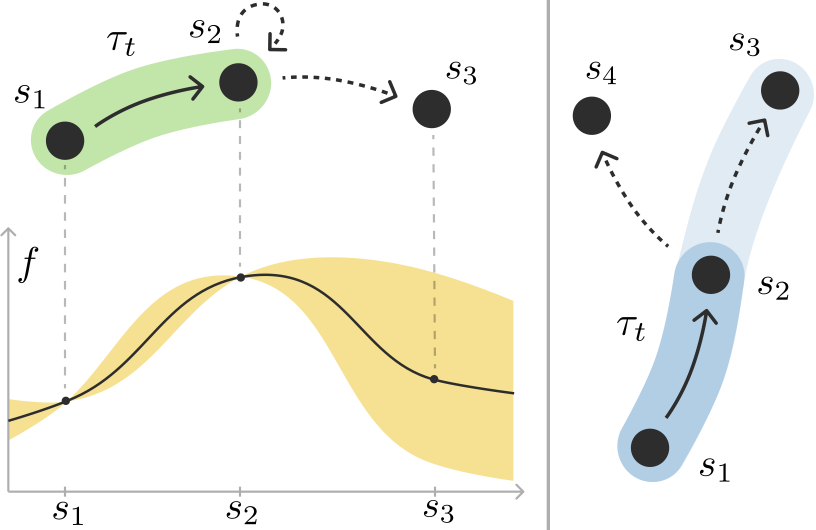}
    \caption{\looseness -1 The agent has visited trajectory $\tau_t$ and must select the next state. On the left, the agent aims to estimate an unknown state function $f$: re-visiting $s_2$ leads to a negative interaction since the information gain has diminishing returns. On the right, the agent seeks a trajectory, \ie ordered set of atoms, maximizing synergies seen as positive interactions among certain combinations of atoms \eg adding $s_3$ to $\tau_t=\{s_1,s_2\}$ leads to a synergetic effect.}
    \label{fig:illustrative_applications}
\end{figure}

\looseness -1 Classic Reinforcement Learning (RL)~\cite{puterman2014markov} represents the value of a trajectory as a sum of \emph{local} rewards over its states(-actions). This fact allows us to exploit Bellman's optimality principle and therefore can find optimal policies using efficient algorithms inspired by dynamic programming~\cite{bellman1966dynamic}. Unfortunately, additive objectives cannot properly capture a multitude of real-world tasks including pure exploration~\cite{hazan2019maxent, mutti2022importance, liu2021behavior}, function-estimation or experimental design~\cite{mutny2023active, tarbouriech2019active, tarbouriech2020active}, imitation learning or distribution matching~\cite{abbeel2004apprenticeship, state_marginal, ghasemipour2020divergence}, risk-averse RL~\cite{garcia2015comprehensive, mutti2022challenging}, diverse skill discovery~\cite{eysenbach2018diversity, campos2020explore, he2022wasserstein}, and constrained RL~\cite{brantley2020constrained, qin2021density}. In these cases, the interactions between states play a fundamental role in determining the performance of a trajectory. As an example, consider the case of experiment design over a Markov chain~\cite{mutny2023active, tarbouriech2019active} in \cref{fig:illustrative_applications} (left), where an agent aims to minimize uncertainty about an unknown function of the states, by observing a noisy sample of it when visiting a state. Intuitively, the new information gained by observing the function value at a state depends on how often that state has been previously visited: the more it has been visited, the less new information is gained. This phenomenon of \emph{negative interactions} between states cannot be captured by additive objectives that simply sum over (fixed) local state rewards \citep{prajapat2023submodular}. Consider a second example (see \cref{fig:illustrative_applications}, right) where an RL agent is used to design molecules~\cite{thiede2022curiosity} by selecting a set of atoms, where each atom is represented as a state, via a trajectory of fixed length. In this case, \emph{positive interactions} between states, \eg synergies, can capture the positive effect of including within the trajectory certain combinations of atoms. But once again, additive objectives used in classic RL can only assign a fixed local reward to each atom thus disregarding the interactions among them.

To tackle this problem, we introduce \emph{Global} RL (GRL), where an agent aims to compute a policy maximizing rewards that are \emph{globally} defined over trajectories rather than \emph{locally} over states. This makes it possible to capture non-additivity and interactions among states even in finite-sample processes.
Then, we formally show how GRL can be interpreted as a specific combinatorial optimization problem (Section \ref{sec:grl_as_co}), and study its relation with Convex RL~\cite{hazan2019maxent, zahavy2021reward} (Section \ref{sec:general_utilities}), an alternative framework to deal with non-additive objectives.
It is easy to see that Global RL is a hard problem in general, thus the rest of the work aims to answer the question: 
\begin{center}
    \emph{When and how can we efficiently and\\ approximately solve the Global RL problem?}
\end{center}
Towards answering this question, we extend discrete semi-gradient methods from submodular optimization~\cite{iyer2013fast} to design a meta-algorithm that converts a GRL problem into a sequence of classic RL planning problems. 
Then, we identify several structural properties of global rewards, leading to approximation guarantees for our algorithmic scheme. 
Among these, \emph{submodular} ~\cite{lovasz1983submodular, krause2014submodular, krause2011submodularity} global rewards capture negative interactions between states, \emph{supermodular}~\cite{gallo1989supermodular, billionnet1985maximizing} global rewards capture positive interactions, and monotone \emph{suBmodular-suPermodular} (BP)~\cite{bai2018greed, ji2019stochastic} global rewards capture mixed interactions.
We show that these reward structures model a wide range of applications that cannot be expressed via local rewards, including maximum entropy exploration~\cite{hazan2019maxent, mutti2022importance}, informative path planning~\cite{prajapat2023submodular}, experiment design~\cite{mutny2023active}, and synergetic trajectory selection among others. 
Furthermore, we use these structures to study the computational hardness of approximation results and perform a thorough experimental evaluation of the proposed methods (Section \ref{sec:experiments}) in the context of experimental design, optimization of design processes, and safe exploration.
To sum up, in this work we present the following contributions:
\begin{itemize}[noitemsep,topsep=0pt,parsep=0pt,partopsep=0pt,leftmargin=*]
    \item The notion of \emph{Global MDP} and the \emph{Global RL} (GRL) problem, which generalizes RL to non-additive objectives.
    \item A general algorithmic scheme to solve GRL by converting it to a sequence of classic MDPs via submodular semi-gradient methods (\cref{sec:method}).
    \item \looseness -1 Approximation guarantees for the proposed algorithms via the notion of \emph{curvature} that explicitly connect the degree of non-additivity of a global reward with the approximation ratio (\cref{sec:guarantees}).
    \item A computational hardness result for GRL, thereby ruling out the possibility of achieving better approximation ratios (\cref{sec:guarantees}).
    \item An extensive experimental evaluation of the proposed algorithms on a wide range of applications (Section \ref{sec:experiments}).
\end{itemize}
\section{Preliminaries}
\label{sec:preliminaries}
We denote with $[N]$ a set of integers $\{ 1, \ldots, N \}.$ Let $X$ be a set, $\Delta(X)$ is the probability simplex over $X$.

\looseness -1 \mypar{Controlled Markov Process (\CMP)} An episodic \CMP~\cite{puterman2014markov} is a tuple $\cmp \!:=\! \langle\Sspace, \Aspace, P, \mu, H\rangle$, where $\Sspace$ is a discrete state space, $\Aspace$ is a discrete action space, $P\!:\! \Sspace \times \Aspace \!\to \!\Delta (\Sspace)$ is the transition model, where $P(s'|s,a)$ is the probability of reaching state $s'$ by taking action $a$ in state $s$. Meanwhile, $\mu \in \Delta (\Sspace)$ is the initial state distribution, $H$ is the horizon of an episode, and we define $\T = [H]$.
In each episode, the agent observes an initial state $s_0 \sim \mu$, selects an action $a_0$, and transitions to $s_1 \sim P(\cdot|s_0, a_0)$. This interaction process is repeated until the episode ends. 

\looseness -1 \mypar{Markov Decision Process (MDP)} If a \CMP \mbox{$\cmp := \langle\Sspace, \Aspace, P, \mu, H\rangle$} is augmented with a scalar reward function \mbox{$r:\Sspace \to \Reals$} or \mbox{$r: \Sspace \times \Aspace \to \Reals$} then we obtain the MDP \mbox{$\cmp_r \coloneqq \langle\Sspace, \Aspace, P, \mu, H, r\rangle$}.

\looseness -1 \mypar{Policies} 
A \emph{policy} encodes the behavior of an agent interacting with a \CMP. A non-Markovian policy \mbox{$\pi \in \Pinm$} is a function \mbox{$\pi : \Hspace_{t} \to \Delta (\Aspace)$}, where $\Hspace_{t}$ denotes the set of all histories, \ie states visited in the past, up to length $t$. A Markovian non-stationary policy $\pi_t \in \PimNs$ is a function \mbox{$\pi_t : \Sspace \times \T \to \Delta(\Aspace)$}, while a Markovian stationary policy \mbox{$\pi \in \PimS$} is a function \mbox{$\pi : \Sspace \to \Delta(\Aspace)$} independent of the time-step.
One can notice that $\PimS \subseteq \PimNs \subseteq \Pinm$. Moreover, we denote with $\Pi$ an arbitrary policy class.

\mypar{Set Functions}
We denote with the term \emph{ground set} a set $\V$ inducing the \emph{family} of subsets $\F :=\{0,1\}^\V = 2^{\V}$. Notice that a function $F: 2^\V \to \Reals$ takes subsets of the ground set $\V$ as input and outputs scalars, formally $F: X \mapsto r$ with $X\subseteq \V$ and $r \in \Reals$. Moreover, every set $X \subseteq \V$ can be represented as a binary vector $x \in \{0,1\}^\V$ with entries given by $x(i) = \mathbb{I}_{i \in X}$.
\section{Global Reinforcement Learning (GRL)}
\label{sec:grl_as_co}
\looseness -1 In this section, we formulate the Global RL (GRL) problem and shed light on its connection with combinatorial optimization, which will be essential to design efficient approximate algorithms. Towards this goal, we first introduce the concept of Global MDP (GMDP), which generalizes the notion of MDP to the case of general non-additive reward functions.
\begin{restatable}[Global Markov Decision Process]{definition}{globalMDPdef}
\label{def:global_mdp_def}
    Consider a CMP $\cmp := \langle\Sspace, \Aspace, P, \mu, H\rangle$ and a global reward function $F: 2^{\Sspace \times \T} \to \Reals$ mapping trajectories  to scalar returns. We define a \emph{Global Markov Decision Process} (GMDP) as the tuple $\gmdp := \langle\Sspace, \Aspace, P, \mu, H, F\rangle$.
\end{restatable}
\looseness -1 We denote as \emph{local} a reward function, \eg $r: \Sspace \to \Reals$, that assigns a scalar reward to each state(-action) and as \emph{global} a reward function $F\!:\!2^{\Sspace \times \T} \mapsto \Reals$ that assigns a scalar reward to each trajectory $\tau \! \coloneqq \{ (s_t,t)_{t=0}^{H-1}\} \subseteq \Sspace \times \T$.
Given a GMDP, we can state the GRL problem as follows.
\begin{tcolorbox}[colframe=white!, top=2pt,left=2pt,right=2pt,bottom=2pt]
\center \textbf{Global Reinforcement Learning}
\begin{equation}
\label{eq:global_reinforcement_learning}
    \max_{\pi \in \Pi} \;\; \J(\pi) \coloneqq \EV_{\tau \sim p_\pi} \big[ F(\tau) \big]
\end{equation}
\end{tcolorbox}
Hereby, $p_\pi$ is the distribution over trajectories induced by policy $\pi$, formally: 
\begin{equation*}
\label{eq:prob_trajectory}
    p_\pi(\tau) = \mu(s_0)\prod_{t=0}^{H-1} \pi_t(a_t | s_t) P(s_{t+1} | s_t, a_t)
\end{equation*}
Particularly, in this work we focus on the setting of known transition model $P$ and global rewards $F$. 
\subsection{GRL as a Subset Selection Problem}
Given a family $\mathcal{F} = 2^\V$ induced by a ground set $\V$, a representative problem in Combinatorial Optimization (CO) is the \emph{subset selection problem}~\cite{das2011submodular}, where one aims to find an optimum of a set-function $F: \mathcal{F} \to \Reals$ while constrained to a sub-family $\mathcal{C} \subseteq \mathcal{F}$, as in Equation \ref{eq:subset_selection_problem}.
\begin{tcolorbox}[colframe=white!, top=2pt,left=2pt,right=2pt,bottom=2pt]
\center \textbf{Subset Selection Problem}
\begin{equation}
\label{eq:subset_selection_problem}
    \max_{X \in \C} \;\;  F(X) 
\end{equation}
\end{tcolorbox}

Towards interpreting GRL as a CO problem, we define a path constraint denoted as \emph{dynamics constraint}, and indicate it with $\mathcal{C}_{\mathcal{M}}$~\cite{blum2007approximation}. Given a CMP $\cmp$, $\mathcal{C}_{\mathcal{M}}$ intuitively represents the set of admissible trajectories according to the dynamics $P$. A formal construction of $\mathcal{C}_{\mathcal{M}}$ based on the time-extended CMP of $\cmp$ is presented in Appendix \ref{sec:dynamics_constraint}.\footnote{A time-extended CMP has state space $\V \coloneqq \Sspace \times \T$.}

Given the notion of dynamics constraint, we can define the trajectory-optimization version of GRL \eqref{eq:global_reinforcement_learning} for {\em deterministic} GMDPs, i.e., fixed initial state and deterministic transitions, as the following subset selection problem:
\begin{tcolorbox}[colframe=white!, top=2pt,left=2pt,right=2pt,bottom=2pt]
\center \textbf{Global RL: Trajectory-Optimization} 
\begin{equation}
\label{eq:global_reinforcement_learning_traj}
    \max_{\tau \in \C_{\cmp}} \;\;  F(\tau) 
\end{equation}
\end{tcolorbox}
\looseness -1 Notice that this problem formulation is sufficient to find optimal policies in deterministic GMDPs as in this case there exists an optimal deterministic policy~\cite{prajapat2023submodular}, which can be interpreted as a trajectory.
Moreover, this formulation can be straightforwardly extended to the general problem in Equation \eqref{eq:global_reinforcement_learning} by replacing $F(\tau)$ by its expectation according to $p_\pi$. Nonetheless, due to the natural analogy between trajectories and sets, rather than between policies and distributions over sets, we will first introduce novel concepts for the trajectory-optimization version of GRL \eqref{eq:global_reinforcement_learning_traj}, and then extend them to the general GRL problem \eqref{eq:global_reinforcement_learning}.

\looseness -1 Crucially, solving GRL (\cref{eq:global_reinforcement_learning,eq:global_reinforcement_learning_traj}) is in general inapproximable, since even for a restricted class of global rewards, it is intractable up to any constant factors as shown by ~\citet[Theorem 1]{prajapat2023submodular}. Nonetheless, in Section \ref{sec:structure}, by leveraging the CO viewpoint introduced within this section, we identify structural properties of $F$ that are both common in practice and lead to efficient approximate algorithms.
\section{Relation with Convex RL}
\begin{table*}[!htbp]
\setlength{\tabcolsep}{4pt}
\renewcommand{\arraystretch}{1.1}
\caption{\looseness -1 Applications of Global RL with (from top) submodular, supermodular, BP, and arbitrary global rewards.}
\vspace{-1mm}
\label{table:GRLapplications_all}
\begin{sc}\begin{small}
\resizebox{\textwidth}{!}{%
\begin{tabular}{c c c c}
\toprule
Application & Set function $F(\tau)$ & Details \\
\midrule
State entropy exploration & $
        \frac{-1}{|\tau|} \sum_{s \in \mathcal{S}} \mathbb{I}_{(s,\cdot)\in\tau} \log \frac{|\{t: (s,t)\in \tau \} |}{|\tau|} 
    $  & \\[0.1em]
\rowcolor{lightgray} Goal reachability & $\indicator\{|\tau \cap S_g|>0\}$ & \\
Coverage & $|\bigcup_{(s,t)\in \tau} D^s|$ &\\ 
\rowcolor{lightgray}Bounded Curvature Coverage & $\sum_{s \in \Sspace} \mathbb{I}_{C(\tau,s)>0}\cdot [1-\alpha(C(\tau,s)-1)]$ & $0 \leq \alpha \leq 1$\\
Informative path planning  &  $g(\bigcup_{(s,t) \in \tau} D^s)$  & $g(V) = \sum_{v \in V} \rho(v)$\\
\rowcolor{lightgray}D-Optimal Experimental Design & $I(y_\tau;f) = H(y_\tau) - H(y_\tau | f)$ & \\
Neighbours coverage in space-time & $\sum_{v \in V} \max \{ \alpha, \min\{ |S \cap S_v|, 1 \} \}$ & $0 \leq \alpha \leq 1$\\
\rowcolor{lightgray}Coverage of time-varying processes & $|\bigcup_{v\in \tau} D^v|$ & \\
\midrule
Goal Completion & $\indicator \{\tau \cap S_g = S_g\}$ & $S_g \subseteq V $\\
\rowcolor{lightgray}Automatic Task Selection & $\sum_{i=1}^N R_i(\tau)^\beta$ & $R_i(\tau) = \sum_{s \in \tau} r_i(s), \beta > 1 $\\
Synergical Trajectory Selection & $\sum_{i=1}^K |\tau \cap S_i|^\beta$ & $S_i \subseteq V, \beta > 1$\\
\midrule
\rowcolor{lightgray}Diverse and Synergical Trajectory Selection & $ |\bigcup_{s\in \tau} D^s| +\sum_{i=1}^K |\tau \cap S_i|^\beta$ & $\beta \geq 1$\\ [0.1em] 
\midrule
Safe Reward Maximization  & $R(\tau) + C \cdot \indicator \{\tau \cap S_{u} = 0\} $ & R \text{additive}\\
\rowcolor{lightgray}Submodular + Safety & $Q(\tau) + C \cdot \indicator \{\tau \cap S_{u} = 0\}$ & Q submodular\\
\bottomrule
\end{tabular}
}\end{small}
\end{sc}
\vspace{-0.4cm}
\end{table*}
\label{sec:general_utilities}
\looseness -1 Before further discussing the GRL problem, we establish a connection with the area of \emph{General Utilities RL} (GURL)~\cite{zahavy2021reward, geist2021concave, zhang2020variational, barakat2023reinforcement}, which offers an alternative way to tackle non-additive objectives. As stated in \cref{eq:general_utilities_obj},
\begin{equation}
\label{eq:general_utilities_obj}
    \max_{d^{\pi} : \pi \in \PimS} \F(d^{\pi}) 
\end{equation}
the GURL objective is to find a policy $\pi \in \PimS$ inducing an optimal state(-action) distribution \wrt a given functional $\F:\Delta(\Sspace \times \Aspace) \to \Reals $.
If $\F$ is convex (concave) in $d^\pi$, the problem in equation \eqref{eq:general_utilities_obj} is referred to as \emph{Convex} (\emph{Concave}) RL (CRL)~\cite{geist2021concave, zhang2020variational, zahavy2021reward}. In this case, the problem can be efficiently solved via standard constrained convex optimization schemes~\cite{hazan2019maxent}, 
but unfortunately, it is characterized by a fundamental modelling limitation.
\subsection{Fundamental Limitation of Convex RL}
\looseness -1 Recently, it has been shown \cite{mutti2022challenging, mutti2023convex, mutti2022importance} that both theoretically and experimentally, an optimal policy \wrt the CRL objective \eqref{eq:general_utilities_obj} can perform arbitrarily poorly when released in an environment for a finite amount of interactions, which is unfortunately the case in most real-world applications. This phenomenon is due to the fact that CRL in \cref{eq:general_utilities_obj} optimizes asymptotic distributions rather than their empirical counterparts. 
To tackle this problem,~\citet{mutti2023convex} propose \emph{Single Trial} Convex RL (ST-CRL) (Equation \ref{eq:single_trial_general_utilities_rl}), which captures the finite-samples nature of the problem by optimizing the expected performance of an empirical distribution $d \in \Delta(\Sspace \times \Aspace)$ induced by the interaction of a policy $\pi$ with the environment for a finite number of steps.
\begin{equation}
\label{eq:single_trial_general_utilities_rl}
    \max_{\pi \in \Pinm} \;\; \EV_{d \sim p^\pi} \big[ \F(d) \big] 
\end{equation}
\looseness -1 Problem \eqref{eq:single_trial_general_utilities_rl}  does not suffer from the aforementioned issue; however, it is intractable, and developing algorithms that approximately solve Problem \eqref{eq:single_trial_general_utilities_rl} is still an open problem~\cite{mutti2022importance}. Notably, also GRL~\eqref{eq:global_reinforcement_learning}  overcomes the modelling limitation of Convex RL by directly optimizing a set function defined over trajectories of finite length.
Moreover, interestingly, we show that any ST-CRL problem \eqref{eq:single_trial_general_utilities_rl} can be rewritten as a Global RL problem \eqref{eq:global_reinforcement_learning}.
\begin{restatable}[Single Trial Convex RL $\subseteq$ Global RL]{proposition}{equivalence}
    \label{proposition:equivalence}
    Given an instance $\mathcal{I^+}$ of ST-CRL
    it is possible to reduce it to an instance  $\mathcal{I_+}$ of GRL \eqref{eq:global_reinforcement_learning}.
\end{restatable}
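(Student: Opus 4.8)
The plan is to construct, from any ST-CRL instance, a GRL instance on the same CMP whose global reward evaluated on a trajectory $\tau$ equals $\F$ applied to the empirical state(-action) distribution $d$ that $\tau$ induces. The key observation is that the empirical distribution is a deterministic function of the trajectory: given $\tau = \{(s_t,t)\}_{t=0}^{H-1}$ (together with the actions, if $\F$ is defined over $\Delta(\Sspace\times\Aspace)$), the induced empirical distribution is $d_\tau(s,a) = \frac{1}{H}\sum_{t=0}^{H-1}\indicator_{(s_t,a_t)=(s,a)}$, i.e. the normalized visitation counts. Since $\tau \subseteq \Sspace\times\T$ is exactly a subset of the ground set $\V = \Sspace\times\T$, and the map $\tau \mapsto d_\tau$ is well-defined, we may simply set $F(\tau) := \F(d_\tau)$. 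This defines a legitimate set function $F : 2^{\Sspace\times\T}\to\Reals$ on the relevant subsets (those corresponding to valid length-$H$ trajectories; on other subsets $F$ can be set arbitrarily, since only trajectories in $\C_\cmp$ are ever evaluated).

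First I would fix notation: let $\mathcal{I^+} = \langle \Sspace,\Aspace,P,\mu,H,\F\rangle$ be the ST-CRL instance, with $\F:\Delta(\Sspace\times\Aspace)\to\Reals$ and objective $\max_{\pi\in\Pinm}\EV_{d\sim p^\pi}[\F(d)]$. I would then define the GRL instance $\mathcal{I_+} = \gmdp = \langle \Sspace,\Aspace,P,\mu,H,F\rangle$ with the \emph{same} CMP components and with $F$ as above. Next I would verify that the two objectives coincide termwise: for any policy $\pi$, the trajectory distribution $p_\pi$ of the GMDP is identical to the ST-CRL trajectory distribution $p^\pi$ (same $\mu$, $P$, and policy class $\Pinm$), and for each realized trajectory $\tau$ the induced empirical $d$ in ST-CRL is precisely $d_\tau$. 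Hence
\[
\J(\pi) = \EV_{\tau\sim p_\pi}[F(\tau)] = \EV_{\tau\sim p_\pi}[\F(d_\tau)] = \EV_{d\sim p^\pi}[\F(d)],
\]
so the optimization problems have the same objective value for every $\pi$ and therefore the same optimizers. This establishes the reduction.

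The only real subtlety — and the step I would be most careful about — is the handling of the ground set and the "subset versus multiset" issue: a trajectory can revisit a state $s$ at different times, but as an element of $2^{\Sspace\times\T}$ the pairs $(s,t)$ are distinct, so no information is lost and $\tau \mapsto d_\tau$ is genuinely a function on subsets of $\V = \Sspace\times\T$ (this is exactly why the paper works with the time-extended ground set). If $\F$ is defined over $\Delta(\Sspace\times\Aspace)$ rather than $\Delta(\Sspace)$, I would note that the same construction goes through by taking the ground set to include actions, or equivalently by observing that in a deterministic-transition reading the action is recoverable, and in the general stochastic case the expectation over $p_\pi$ already matches the expectation over $p^\pi$. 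A minor remaining point is well-definedness of $F$ on subsets that are not valid trajectories, which is irrelevant since the GRL objective only ever queries $F$ on trajectories in the support of some $p_\pi$; I would simply extend $F$ by, say, $0$ elsewhere. No curvature or structural assumptions on $\F$ are needed for this direction, so the proof is essentially the verification of a commuting diagram between the two trajectory-to-objective maps.
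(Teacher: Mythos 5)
Your proposal is correct and follows essentially the same route as the paper: define $F(\tau) := \F(d_\tau)$ on the same CMP, observe that the empirical distribution is a deterministic function of the trajectory over the time-extended ground set, and conclude that $\EV_{\tau\sim p_\pi}[F(\tau)] = \EV_{d\sim p^\pi}[\F(d)]$ for every policy. The only cosmetic difference is that the paper phrases the correspondence as a bijection between trajectories and empirical distributions and verifies $p_\pi(\tau)=p^\pi(d_{\tau})$ explicitly, whereas you use the (sufficient) pushforward argument requiring only that $\tau\mapsto d_\tau$ be well-defined.
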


\looseness -1 The proof can be found in appendix \ref{sec:proofs}. Crucially, in ST-CRL \eqref{eq:single_trial_general_utilities_rl} convexity is lost due to the empirical distributions constraints set, and alternative structural properties useful for optimization are not known yet. In contrast, GRL correctly captures the underlying combinatorial nature of the problem. This fact makes it possible to leverage structural assumptions for efficient approximate optimization common in a wide variety of real-world problems, including typical CRL applications, as shown in the next sections.
\section{Exploiting Structure in Global RL}
\label{sec:structure}
\looseness -1 As previously mentioned, solving the GRL problem is computationally intractable even up to constant factor approximations~\citep{prajapat2023submodular,chekuri_rg}. Nonetheless, in this section, we introduce two fundamental components of the global rewards set function $F$, namely \emph{submodular} and \emph{supermodular} rewards, which we leverage to approximately solve \GRL efficiently. In the following, we show that these properties offer an intuitive characterization and can be used to model a variety of applications by decomposing global rewards into two fundamental components.

\begin{definition}[Submodular rewards] \label{def:submodularity} \looseness -1 A global reward function \mbox{$Q:2^{\V} \to \R$} is called \emph{submodular} if for every \mbox{$\tau_A \subseteq \tau_B \subseteq \V$} and \mbox{$v \in \V \backslash \tau_B$} it holds that \mbox{$Q(\{v\}|\tau_A) \geq Q(\{v\} | \tau_B)$}, where the marginal gain (discrete derivative) \mbox{$Q(\{v\}|\tau_A) \coloneqq Q(\{v\} \cup \tau_A) - Q(\tau_A)$}. 
\end{definition}
\looseness -1 Thus, submodularity naturally captures a diminishing return property, i.e., the marginal gain of adding a state $v$ to a smaller trajectory $\tau_A$ is higher as compared to adding it's super set $\tau_B$. This denotes \emph{negative interaction} between states i.e. similar states are discouraged and thus maximizing such functions will encode diversity in the resulting trajectory. 

\begin{definition}[Supermodular rewards] \label{def:supermodularity} \looseness -1 A global reward function $G:2^{\V} \to \R$ is \emph{supermodular} if for every $\tau_A \subseteq \tau_B \subseteq \V$ and $v \in \V \backslash \tau_B$ it holds that $G(\{v\}|\tau_A) \leq G(\{v\} | \tau_B)$. 
\end{definition}

\looseness -1 Therefore, contrary to submodular, maximizing supermodular rewards encodes complementarity in the resulting trajectory, i.e., having similar states complement each other and results in larger gains -- \emph{positive interaction}. Next, combining both we define another reward class called monotone suBmodular suPermodular (\BP) rewards \citep{bai2018greed}:
\begin{definition}[\BP rewards]
\looseness -1 A global reward function $F: 2^\V \to \R$ admits a \BP decomposition if there exists a submodular reward function $Q$ and a supermodular reward function $G$ both normalized $(Q(\emptyset)=G(\emptyset)=0)$ and monotonic non-decreasing $(Q(\{v\}|\tau_A) \geq 0, G(\{v\}|\tau_A) \geq 0, \forall v \in \V, \tau_A \subseteq \V$ such that $F(\tau_A) = Q(\tau_A) + G(\tau_A) ~\forall \tau_A \subseteq \V$.
\end{definition}
\looseness -1 Thus, BP rewards enable us to capture applications involving both positive and negative interactions. Interestingly, any arbitrary global reward $F: 2^\V \to \R$ can be decomposed as the sum of submodular and supermodular rewards, i.e., $F(\tau_A) = Q(\tau_A) + G(\tau_A)$ as shown for set functions by \citet[Lemma 4]{narasimhan2012submodular}. Moreover, under certain conditions, this decomposition can be computed in polynomial time \citep[c.f. Lemma 3.2]{iyer2012algorithms}. Crucially, in \cref{sec:structure} we present an algorithmic scheme that does not make any assumption (e.g., monotonicity) on the submodular or supermodular component, hence it is applicable to any GMDPs with arbitrary global rewards as long as the decomposition is available.

\mypar{Examples of global rewards} In \cref{table:GRLapplications_all}, we show a wide range of relevant applications that can be modelled with global rewards. Applications such as state entropy exploration, D-optimal experiment design, where the objective is maximized by visiting a diverse set of states, are captured using submodular rewards. In contrast, the supermodular component captures positive interactions among states, \eg certain synergies among a specific set of states. Combining both components (\BP rewards) becomes particularly relevant when addressing intricate processes that require both, e.g., diverse and synergical trajectory selection. 
Moreover, we can model constraints using arbitrary (e.g., non-monotone \BP) global rewards and thus extend to applications involving policy learning under safety-critical conditions.
Furthermore, since our rewards are defined on state time pairs, we can also model time-varying processes. 
\section{Semi-gradient Method for GRL}
\label{sec:method}
\looseness -1 In this section, we present our novel algorithmic scheme to solve the \GRL Problem. We begin by explaining the notion of subdifferentials, which are used to obtain semi-gradients of global rewards. Using the semi-gradients, we first present the algorithm for deterministic GMDPs (\ref{eq:global_reinforcement_learning_traj}) in order to build intuition, and then extend it to the general Problem \eqref{eq:global_reinforcement_learning}.

\looseness -1 \mypar{Subdifferentials} 
Given a non-additive set function $F$, similar to convex functions, we can define the subdifferential $\partial_F(X)$ at a set $X \subseteq V$ as shown in \citet{iyer2015polyhedral}:
\begin{equation*}
\partial_F(X) \! := \! \{x \in \R^n \!\!: \!F(Y) \!\geq\! F(X) + x(Y) - x(X),\!  \forall  Y \!\subseteq\! V\}
\end{equation*}
Here, $x(A) \coloneqq \sum_{v \in A} x(v)$ is a modular function over $V$. Although the polyhedron $\partial_F(X)$ can be defined for any set function, it can be characterized efficiently for submodular functions. We denote a subgradient of $F$ at $X$ as $h_X \in \partial_F(X)$. For submodular $F$, the extreme points of the polyhedron can be characterized as follows:
   
\looseness -1 First, we define a permutation $\sigma: [|V|] \to V$ that re-orders the elements of $V$ such that the elements of $X$ are assigned to the first $|X|$ positions of $V$ (i.e., $\sigma(i) \in X \iff i \leq |X|$), whereas the remaining elements are assigned arbitrarily. We define the set $S^\sigma_0 := \emptyset$ and $S^\sigma_i := \{\sigma(1), \ldots, \sigma(i)\}$. It is important to note that as a consequence, we have $S_{|X|} = X$. With this, we obtain the extreme point $h^\sigma_{X}$ of the polyhedron $\partial_F(X)$ with entries $h^\sigma_{X}(\sigma(i)) = F(S^\sigma_i) - F(S^\sigma_{i-1})$.
Now, given any subgradient $h_X$, we define a modular function,
\begin{equation*}
    m^\sigma_X(Y) := F(X) + h_X(Y) - h_X(X), \label{eq:lb_state}
\end{equation*}
which is defined $\forall Y \subseteq V$ and is a tight lower bound of $F$, i.e., $m_X(X) = F(X)$ with $m_X(Y) \leq F(Y), \forall Y \subseteq V$, see \cref{prop:tight_LB} for details. Notably, if we choose subgradient $h_X$ to be the extreme point of $\partial_F(X)$, then the modular function simplifies to $m_X(Y) = h_X(Y)$, which is a marginal gain of $F$ evaluated with respect to some permutation.

\looseness -1 Analogously, for supermodular functions we define the following modular lower bound $\forall Y \subseteq V$ about the set $X$, as shown in \citet{bai2018greed, iyer2012submodular}:
\begin{equation*}
    m_{X}(Y)\!:= F(X) - \!\!\! \sum\limits_{j \in X \backslash Y} \!\! F(j \mid X \backslash j) + \!\!\! \sum\limits_{j \in Y \backslash X} \!\! F(j \mid \emptyset). 
\end{equation*}
\looseness -1 Then, $m_{X}(Y)\! \leq \!F(Y), \forall Y \!\subseteq \!V$ and $m_{X}(X) \!= \!F(X)$. Furthermore, by adding the tight modular lower bounds for submodular and supermodular functions, we obtain tight modular lower bounds for global rewards \citep{bai2018greed}.

\begin{figure}[t]
    \centering
    \includegraphics[width=0.40\textwidth]{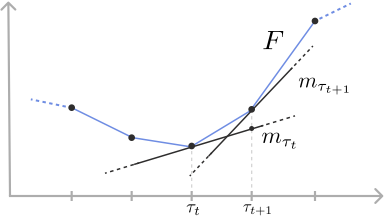}
    \caption{\looseness -1 In \GTO, at any step $t$, we construct $m_{\tau_t}$, a tight modular lower bound about $\tau_t$ and optimize the resulting classic MDP, which results in an improved trajectory $\tau_{t+1}$.}
    \label{fig:lower_bounds}
\end{figure}

\mypar{Global Trajectory Optimization (\GTO)} \looseness -1 Naturally, two key questions emerge: How can we use these modular lower bounds to solve the GRL problem? And, how do we make sure the dynamics constraints are satisfied?
Intuitively, a modular lower bound of the global reward function represents nothing but an additive reward. We recursively approximate the global rewards function about the current trajectory $\tau$ with additive rewards (as shown in \cref{fig:lower_bounds}), which results in a sequence of classic RL planning problems. These problems can be optimized efficiently through standard \RL techniques while ensuring admissible trajectories, i.e., satisfying the dynamics constraints. 
In simpler terms, our algorithmic scheme is a meta-algorithm that approximates GMDP with a sequence of local MDPs which then are solved with existing MDP solvers. The lower bounds being tight at the current trajectory ensures monotonic improvement across the iterations until convergence. Since $F$ is defined on $\V \coloneqq \Sspace \times T$, our modular lower bounds are defined on $\V$.  To construct a modular lower bound tight at a trajectory $\tau$, we define a permutation $\sigma_\tau$ induced by $\tau$ as follows, 
\begin{equation}
    \sigma_\tau = \{ \tau; \V \backslash \tau\} \label{eq:tight_permutation}
\end{equation}
In the described permutation, the elements within the set $\V \backslash \tau$ or the set $\tau$ can be arbitrarily permuted among themselves, while still resulting in a valid tight modular lower bound (\cref{prop:tight_LB}). This flexibility can also be exploited to incorporate prior knowledge for learning better policies. We refer the reader to \cref{sec:alternative_LB} for further details on modular lower bounds.

\looseness -1 The resulting algorithm is summarized in \cref{alg:global_traj_optim}.
We start with an arbitrary trajectory $\tau_1$. The algorithm has two steps: \textit{i)} computes a modular lower bound of the \BP function as described above to obtain additive rewards (classic MDP) about the current trajectory $\tau_t$ (Line \ref{alg:GTO:linearize_MDP}) and \textit{ii)} solves the resulting classic \MDP using existing \mdpSolver tools, e.g., value iteration, which guarantees an optimal solution corresponding to the rewards $m_{\tau_{t}}$ (Line \ref{alg:GTO:solve_MDP}). On solving, the new trajectory will achieve a monotonic improvement in objective value (\cref{lem:monotonic_improvement}). The algorithm alternates between the two steps until convergence, i.e., the objective does not improve anymore.

\savebox{\algGTO}{%
\begin{minipage}[t]{.48\textwidth}
\begin{algorithm}[H]
    \caption{Global Trajectory Optimization (\GTO)}
    \label{alg:global_traj_optim}
        \begin{algorithmic}[1]
        \State{\textbf{Input:} Deterministic \GMDP, $\tau_1$}
        \For{$t=1, 2, \hdots$}
            \State{\!\!\!$m_{\tau_t} \leftarrow$ Compute lower bound of F around $\tau_t$ \label{alg:GTO:linearize_MDP}}
            \State{\!\!\!$\tau_{t+1}\!\leftarrow \mdpSolver(\mdp=\!\langle \Sspace, \Aspace, P, s_0, H, m_{\tau_t} \rangle)$  \label{alg:GTO:solve_MDP} }
        \EndFor
        \State{Return $\tau_{t}$}
        \end{algorithmic}
\end{algorithm}
\end{minipage}}
\begin{figure}
\usebox{\algGTO}
\end{figure}

\looseness -1 \mypar{Global Policy Optimization (\GPO)} For stochastic GMDPs, we solve the problem in policy space. For this, we generalize the concept of lower bound about a trajectory to distribution over trajectories generated by a policy. The corresponding modular lower bound about a policy, $m^{E}_{\pi} \in \R^V$ is defined as the expectation of the modular rewards about the trajectories induced by the policy $\pi$, i.e.,
\begin{align}
    m^{E}_{\pi} \coloneqq \EV_{\tau \sim \pi} [m_{\tau}]. \label{eq: stochastic_lb}
\end{align}
This can be estimated, for instance, using Monte Carlo samples. Next, we define the linearization of the \GRL objective $\J(\pi)$ about a policy $\pi'$ and evaluating it \eqref{eq: stochastic_lb} at a policy $\pi$ as,
\begin{align}
    m^{E}_{\pi'}(\pi) \coloneqq \EV_{\tau \sim \pi} \EV_{\tau' \sim \pi'} [m_{\tau'}(\tau)].
\end{align}
\looseness -1 The algorithm steps for \GPO largely remains the same; however, in the stochastic case, the \mdpSolver solves $\max_{\pi} m^{E}_{\pi_{t-1}}(\pi)$ using standard \RL techniques. Please see \cref{sec:appx_algorithm} for a detailed algorithm for stochastic GMDPs. While our algorithm can optimize arbitrary global rewards, in the next section we present approximation guarantees for monotone submodular, supermodular and BP rewards. 
\section{Approximation guarantees and Hardness}
\label{sec:guarantees}
Due to the non-additivity of the reward function, performance guarantees cannot be based on the classic notion of value function \cite{puterman2014markov}. Nonetheless, along the lines of~\cite{tarbouriech2019active, mutti2023convex}, we define the following notion of optimality.
\begin{restatable}[Non-additive Suboptimality Gap]{definition}{suboptimalityGap}
\label{def:suboptimality_gap} 
Consider a policy $\pi \in \Pi$ interacting with a Global MDP $\cmp_F$. We define the non-additive suboptimality gap of $\pi$ as: 
\begin{equation*}
    \Rmath(\pi) \coloneqq \J^\star - \EV_{\tau \sim p_\pi} \big[ F(\tau) \big] 
\end{equation*}
where $\J^\star \coloneqq  \max_{\pi \in \Pinm} \;\;  \EV_{\tau \sim p_\pi} \big[ F(\tau) \big]$.
\end{restatable}
Since it is well-understood that in general $\PimNs$ is not sufficient to minimize the Non-additive Suboptimality Gap~\cite{prajapat2023submodular, mutti2022importance}, in the following we give guarantees \wrt the reference class $\Pinm$.

\subsection{How good is a modular approximation?}
The algorithms proposed in Section \ref{sec:method} approximately solve the GRL problem (\ref{eq:global_reinforcement_learning_traj} and \ref{eq:global_reinforcement_learning}) by optimizing a sequence of modular lower bounds of the global reward $F$.
In this section we aim to derive first-iterate approximation guarantees~\cite{iyer2013fast} to answer the question:
\begin{center}
    \emph{How well can a Global MDP be approximated by a single MDP with local rewards? }
\end{center}
Intuitively, depends on how much the global reward is \emph{non-additive}. For the case of submodular, supermodular, and BP rewards, this can be captured via the notions of \emph{submodular} and \emph{supermodular curvature} \citep{CONFORTI1984251}.
\begin{restatable}[Submodular and Supermodular Curvature]{definition}{submodularCurvature}
\label{def:submodular_curvature} 
Given a monotone set-function $F: 2^V \to \Reals$ we define the submodular and supermodular curvatures respectively as: 
\begin{align*}
    \!\!k_F \!:= 1 - \min_{v \in V} \frac{F(v \mid V \backslash \{v\})}{F(v)} \in [0,1] \tag{$F$ submodular}\\
     \!\!k^F\! := 1 - \min_{v \in V} \frac{F(v)}{F(v \mid V \backslash \{v\})} \in [0,1] \tag{$F$ supermodular} 
\end{align*}
\end{restatable}
For a BP function $F=Q+G$ we can derive curvature-based guarantees by combining the curvatures of $Q$ and $G$.
Moreover, we say that a submodular (supermodular) function $F$ has bounded curvature if $k_F (k^F) < 1$. Otherwise, we say that $F$ is fully-curved. We can now state the approximation guarantees achieved by \GPO in a general stochastic GMDP \wrt the Non-additive Suboptimality Gap (Definition \ref{def:suboptimality_gap}). 
\begin{tcolorbox}[colframe=white!, top=2pt,left=2pt,right=2pt,bottom=2pt] 
\begin{restatable}[Approximation Guarantees \GPO]{theorem}{guaranteeStochastic}
\label{theorem:guarantees_stochastic} 
\looseness -1 Let $\J^\star \coloneqq  \max\nolimits_{\pi \in \Pinm} \;\;  \EV_{\tau \sim p_\pi} \big[ F(\tau) \big]$ and $\pi_1$ the policy resulting from one iteration of \GPO on a GMDP $\gmdp$. Then \GPO guarantees that for 
\begin{enumerate}[label=\textit{\roman*}),noitemsep]
\item Monotone submodular reward function, $F$
    \begin{align*}
    \Rmath(\pi_1) &\leq k_F \J^\star,
    \end{align*}
\item Monotone supermodular reward function, $F$   
    \begin{align*}
    \Rmath(\pi_1) &\leq \frac{2k^F-(k^F)^2}{1-k^F} \J^\star,
    \end{align*}
\item BP reward function, $F=Q+G$  
\begin{align*}
\!\!\!\! \Rmath(\pi_1) &\leq \alpha \J^\star,~\alpha  =
    \begin{cases}
    \frac{2k^G-(k^G)^2}{1-k^G}  & \!\text{if $k_F\leq k^G$} \\
    \frac{1-(1-k_Q)(1-k^G)}{1-k^G} & \!\text{otherwise} 
    \end{cases} 
\end{align*}
\end{enumerate}
\end{restatable}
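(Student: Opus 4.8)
The plan is to reduce the stochastic (policy-space) statement to a single-shot combinatorial bound on how much a tight modular lower bound of $F$ can underestimate the true value, and then lift that bound through the expectation over trajectories and the MDP-solver step. Concretely, fix the optimal non-Markovian policy $\pi^\star$ attaining $\J^\star$, and let $\pi_1$ be the output of one \GPO\ iteration started from some $\pi_0$. Since the \mdpSolver\ returns $\pi_1 \in \argmax_\pi m^E_{\pi_0}(\pi)$, we have $m^E_{\pi_0}(\pi_1) \ge m^E_{\pi_0}(\pi^\star) = \EV_{\tau \sim \pi^\star}\EV_{\tau' \sim \pi_0}[m_{\tau'}(\tau)]$. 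Combining this with the tightness/lower-bound property of the modular surrogate (each $m_{\tau'}$ satisfies $m_{\tau'}(\tau) \le F(\tau)$ for all $\tau$, and $m_{\tau'}(\tau')=F(\tau')$, from \cref{prop:tight_LB}), we get $\EV_{\tau\sim\pi_1}[F(\tau)] \ge \EV_{\tau\sim\pi_1}[m^E_{\pi_0}(\tau)] = m^E_{\pi_0}(\pi_1) \ge \EV_{\tau\sim\pi^\star}\EV_{\tau'\sim\pi_0}[m_{\tau'}(\tau)]$. So it suffices to lower bound $\EV_{\tau'\sim\pi_0}[m_{\tau'}(\tau)]$ by $c \cdot F(\tau)$ for the relevant constant $c = 1-\alpha$ uniformly over $\tau$, which then yields $\EV_{\tau\sim\pi_1}[F(\tau)] \ge c\, \EV_{\tau\sim\pi^\star}[F(\tau)] = c\,\J^\star$, i.e. $\Rmath(\pi_1) \le \alpha\,\J^\star$.

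The heart of the argument is therefore the purely combinatorial claim: for a monotone set function $F$ with the relevant curvature, and for any ground-set subset $\tau'$ and its induced permutation $\sigma_{\tau'}$, the modular function $m_{\tau'} = h_{\tau'}$ (the extreme-point subgradient) satisfies $m_{\tau'}(\tau) \ge (1-\alpha) F(\tau)$ for every $\tau$. \textbf{Submodular case:} expand $m_{\tau'}(\tau) = \sum_{v\in\tau} h_{\tau'}(v)$, where each $h_{\tau'}(v)$ is a marginal gain $F(v \mid S^{\sigma_{\tau'}}_{i-1})$. By submodularity each such marginal is at least $F(v \mid V\setminus\{v\}) \ge (1-k_F) F(v)$, so $m_{\tau'}(\tau) \ge (1-k_F)\sum_{v\in\tau}F(v) \ge (1-k_F) F(\tau)$ by monotonicity/subadditivity, giving $\Rmath(\pi_1) \le k_F \J^\star$. \textbf{Supermodular case:} use the supermodular modular lower bound $m_X(Y) = F(X) - \sum_{j\in X\setminus Y}F(j\mid X\setminus j) + \sum_{j\in Y\setminus X}F(j\mid\emptyset)$ with $X=\tau'$; bound $F(j\mid\emptyset) \ge (1-k^F)F(j\mid V\setminus j)$ and control the negative terms via $F(j\mid X\setminus j) \le F(j \mid \tau \cap (X\setminus j)) $ or directly by telescoping, extracting the stated $\frac{2k^G-(k^G)^2}{1-k^G}$ after simplification — this is essentially the first-iterate bound of \citet{bai2018greed} transplanted into our setting. \textbf{BP case:} write $F = Q+G$ and add the two modular lower bounds; the two branches of $\alpha$ come from comparing $k_F$ (the submodular curvature of $F$, bounded using $k_Q$) against $k^G$ and taking the worse surrogate-quality ratio, exactly as in the deterministic \BP\ guarantee.

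The main obstacle I anticipate is the supermodular / \BP\ bookkeeping: unlike the clean per-element argument for submodular $F$, the supermodular modular lower bound carries subtractive terms $\sum_{j\in X\setminus Y}F(j\mid X\setminus j)$ that are not individually small, so one must argue that on the set $\tau$ of interest these are offset by the additive $\sum_{j\in Y\setminus X}F(j\mid\emptyset)$ contributions and the base value $F(X)$, and then optimize the resulting expression over the curvature parameter to land on exactly $\frac{2k^F-(k^F)^2}{1-k^F}$. A secondary subtlety is that the guarantee is stated against the full non-Markovian class $\Pinm$ while \GPO's \mdpSolver\ step optimizes over a (possibly smaller) policy class; this is handled by noting that the linearized problem $\max_\pi m^E_{\pi_0}(\pi)$ is a classic MDP whose optimum over $\Pinm$ is attained within the Markovian class, and that $\pi^\star$ is used only as a feasible comparator inside the expectation, so the inequality $m^E_{\pi_0}(\pi_1) \ge m^E_{\pi_0}(\pi^\star)$ still holds after the usual time-extension of the \CMP. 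Everything else is routine: linearity of expectation to pass from the trajectory-level combinatorial bound to $\J$, and $\Rmath(\pi_1) = \J^\star - \EV_{\tau\sim\pi_1}[F(\tau)] \le \alpha \J^\star$.
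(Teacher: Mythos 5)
Your case \textit{i)} (monotone submodular $F$) is essentially the paper's own argument: linearize at $\pi_0$, use $m^{E}_{\pi_0}(\pi_1)\ge m^{E}_{\pi_0}(\pi^\star)$, expand the extreme-point subgradient into marginal gains, apply the curvature inequality $F(v\mid S)\ge(1-k_F)F(v)$ and subadditivity. That part is correct and matches the paper step for step (including the remark that $\pi^\star\in\Pinm$ only enters as a feasible comparator of the linearized MDP).

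The gap is in cases \textit{ii)} and \textit{iii)}. Your plan reduces everything to the uniform surrogate-quality claim $\EV_{\tau'\sim\pi_0}\big[m_{\tau'}(\tau)\big]\ge(1-\alpha)\,F(\tau)$ for \emph{every} trajectory $\tau$, but for the supermodular lower bound this pointwise multiplicative bound is false and cannot be repaired: $m_{X}(Y)=F(X)-\sum_{j\in X\setminus Y}F(j\mid X\setminus j)+\sum_{j\in Y\setminus X}F(j)$ has a subtractive part that, under supermodularity, can exceed $F(X)$ (e.g.\ for $F(S)=|S|^2$ one has $\sum_{j\in X}F(j\mid X\setminus j)=2|X|^2-|X|>F(X)$), so $m_{\tau_0}(\tau)$ can be negative, and in particular far below $(1-\alpha)F(\tau)$, on feasible trajectories $\tau$ of small value that may well lie in the support of $\pi^\star$. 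This also explains why the final constant $1-\alpha=\frac{(k^F)^2-3k^F+1}{1-k^F}$ has the additive form ``$(1-k^F)$ minus $\frac{k^F}{1-k^F}$'': it does not come from a per-trajectory ratio at all. The paper's proof keeps the subtractive term explicit, bounds it \emph{in aggregate} by $\frac{k^F}{1-k^F}\,\EV_{\tau_0\sim\pi_0}\big[\sum_{j\in\tau_0\setminus\tau^\star}F(j)\big]\le\frac{k^F}{1-k^F}\,\EV_{\tau_0\sim\pi_0}[F(\tau_0)]=\frac{k^F}{1-k^F}\,\J(\pi_0)$ using superadditivity and monotonicity of $F$ (Lemma \ref{lemma:telescoping_supermodular} supplies the $(1-k^F)F(\tau^\star)$ part), and only then invokes the \emph{global} optimality comparison $\J(\pi_0)\le\J^\star$, which is legitimate precisely because $\pi_0$ is a feasible policy. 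That optimality step is absent from your outline and is exactly what makes the constants attainable; note also that in the stochastic setting one cannot compare $F(\tau_0)$ with $F(\tau^\star)$ trajectory-wise, only $\J(\pi_0)$ with $\J(\pi^\star)$. The BP case needs the same machinery plus the decomposition $\J=\J_Q+\J_G$, monotonicity of $G$ together with non-negativity of $Q$ (to get $\J_G(\pi_G^\star)\le\J^\star$), and the final case analysis on $k_Q$ versus $k^G$ applied to the two coefficients $\frac{(1-k_Q)(1-k^G)-k^G}{1-k^G}$ and $\frac{(1-k^G)^2-k^G}{1-k^G}$; ``taking the worse surrogate-quality ratio'' as you propose does not produce these expressions.
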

\end{tcolorbox}
\looseness -1 Interestingly, the notion of Non-additive Suboptimality Gap (\cref{def:suboptimality_gap}) can be tailored to the trajectory-optimization version of GRL  \eqref{eq:global_reinforcement_learning_traj}, and similar guarantees for \GTO can be derived for deterministic GMDPs. The analysis for this case and the proof of Theorem \ref{theorem:guarantees_stochastic} are in \cref{sec:det_case_appendix,sec:proofs}.

\looseness -1 \mypar{Discussion} \cref{theorem:guarantees_stochastic} relates the suboptimality gap achieved by Markovian policies with the degree of non-additivity of the reward function as captured by curvature.\footnote{Computable in linear time \wrt $|\Sspace \times \T|$.} Notably, the submodular functions enjoy the best approximation ratio, while high supermodular curvature seems to substantially affect the solution quality. Moreover, notice that since \GPO optimizes a sequence of lower bounds, the quality of the returned solutions can be better than the ones stated in Theorem~\ref{theorem:guarantees_stochastic}, as shown in the experiments (Section \ref{sec:experiments}). Ultimately, notice that these results extend classic submodular function maximization results to the case of constrained optimization under dynamics constraints, as presented in Section \ref{sec:grl_as_co}.

\subsection{Hardness of Global RL}
\label{sec:hardness}
\begin{figure*}[t] \setlength{\abovecaptionskip}{4pt}
\centering
\subcaptionbox[Short Subcaption]{
     Experiment design
    \label{subfig:d_experimental_design_det}}
[
    0.33\textwidth 
]
{%
    \includegraphics[width=0.3\textwidth]{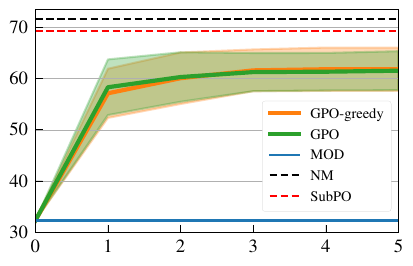}%
}%
\hfill 
\subcaptionbox[Short Subcaption]{%
    Diverse synergies
    \label{subfig:diverse_synergies_stoch}%
}
[%
    0.33\textwidth 
]%
{%
    \includegraphics[width=0.3\textwidth]{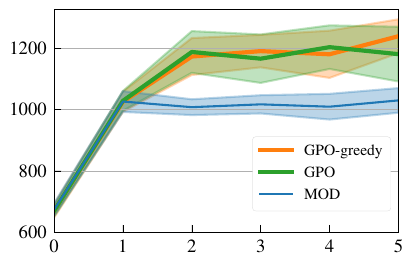}%
}%
\hfill
\subcaptionbox[Short Subcaption]{%
    Safe state coverage
    \label{subfig:indicator_safe_cover_1_det}%
}
[%
    0.33\textwidth 
]%
{%
    \includegraphics[width=0.3\textwidth]{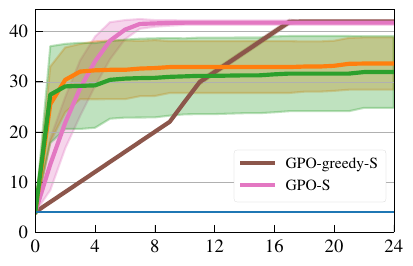}%
}%
\caption[Short Caption]{We compare \GTO and \GPO with the optimal policy for the modularized objective $F_m$ (MOD). We observe that MOD performs sub-optimally as its objective cannot capture interactions between states. The alternative versions of the algorithm tested are presented in Section \ref{sec:alternative_LB}.   (Y-axis: $\J(\pi)$, X-axis: iterations)}
\label{fig:label} \vspace{-0em}
\end{figure*}

To conclude this section, we present a hardness result for the trajectory-optimization GRL Problem \eqref{eq:global_reinforcement_learning_traj} in deterministic environments. The following result extends known results for BP-function maximization~\cite{bai2018greed} to the case of dynamics-constrained optimization. 

\begin{tcolorbox}[colframe=white!, top=2pt,left=2pt,right=2pt,bottom=2pt]
\begin{restatable}[Hardness of GRL, trajectory-optimization \eqref{eq:global_reinforcement_learning_traj}]{theorem}{hardness}
\label{theorem: hardness}
\looseness -1 For all $0 \leq \beta \leq 1$, there exists an instance of a BP global reward $F = Q + G$ with $k^G = \beta$ such that no poly-time algorithm can achieve an approximation factor better than $1-k^G + \epsilon \ \forall \epsilon > 0$ \wrt the Non-additive Suboptimality Gap in deterministic GMDPs, unless P = NP.
\end{restatable}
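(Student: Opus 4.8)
\textbf{Overall strategy.} The plan is to reduce from a known hardness result for monotone \BP-function maximization — specifically the inapproximability bound of \citet{bai2018greed}, which states that for every $0 \le \beta \le 1$ there is a monotone \BP function $F = Q + G$ with supermodular curvature $k^G = \beta$ that cannot be maximized to within a factor better than $1 - k^G$ (up to $\epsilon$) in polynomial time over a simple constraint (e.g.\ a cardinality or matroid constraint), unless $\mathrm{P} = \mathrm{NP}$. The key observation is that a cardinality constraint ``$|\tau| \le H$'' can be encoded as a \emph{dynamics constraint} $\C_\cmp$ in a deterministic GMDP, so any algorithm approximating Problem \eqref{eq:global_reinforcement_learning_traj} would yield an algorithm for the underlying constrained \BP-maximization problem with the same ratio, contradicting the assumed hardness.

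\textbf{Key steps, in order.}
\emph{(1) Recall the base hard instance.} Fix $\beta \in [0,1]$ and invoke \citet{bai2018greed} to obtain a ground set $U$, a monotone normalized submodular $Q$, and a monotone normalized supermodular $G$ with $k^G = \beta$, together with a cardinality bound $k$, such that distinguishing / approximating $\max_{|X| \le k} (Q+G)(X)$ to a factor $> 1 - k^G + \epsilon$ is $\mathrm{NP}$-hard.
\emph{(2) Build a deterministic GMDP realizing the constraint.} Construct a \CMP $\cmp = \langle \Sspace, \Aspace, P, \mu, H \rangle$ whose time-extended state space $\V = \Sspace \times \T$ corresponds to $U$ (with the time index used to make elements distinct), with $H = k$, deterministic transitions, and a fixed initial state, arranged as a layered ``gadget'' graph so that the set of admissible trajectories $\C_\cmp$ is exactly (or contains, in a value-preserving way) the family of size-$\le k$ subsets of $U$ that are selectable. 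A clean choice is a complete layered DAG on $k$ layers, where at step $t$ the agent picks any not-yet-chosen element of $U$ (plus a ``skip'' action to allow smaller sets), so that trajectories $\tau \in \C_\cmp$ biject with subsets of size at most $k$.
\emph{(3) Port the reward.} Define the global reward $F$ on $\V$ by $F(\tau) = Q(\tau) + G(\tau)$ using the base $Q, G$; since $F$ only depends on the \emph{set} of visited states (not their order or times) this is well-defined, and its submodular/supermodular decomposition and curvature $k^G = \beta$ are inherited verbatim, $k^G$ being invariant under adding a constant-marginal or under the relabeling.
\emph{(4) Transfer the hardness.} Observe that $\max_{\tau \in \C_\cmp} F(\tau)$ equals $\max_{|X| \le k}(Q+G)(X)$, and $\J^\star$ (the optimum over $\Pinm$) coincides with this value because in a deterministic GMDP there is an optimal deterministic policy, i.e.\ a single trajectory (as already noted in Section \ref{sec:grl_as_co}). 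Hence a polynomial-time algorithm achieving suboptimality gap $\Rmath(\pi) \le (1 - (1 - k^G + \epsilon))\J^\star = (k^G - \epsilon)\J^\star$ — equivalently value at least $(1 - k^G + \epsilon)\J^\star$ — would solve the base problem beyond its inapproximability threshold, forcing $\mathrm{P} = \mathrm{NP}$.

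\textbf{Main obstacle.} The delicate point is step (2): making $\C_\cmp$ encode ``all subsets of size $\le k$'' \emph{exactly} while keeping the construction polynomial and the reward well-defined on $\V = \Sspace \times \T$. Because the reward in the base instance is a plain set function on $U$ but GRL rewards live on state–time pairs, I need the gadget to guarantee that no element is visited twice (or to argue that revisits are never beneficial given monotonicity, so they can be ignored), and that the ``skip'' mechanism for sub-cardinality sets does not inflate $\C_\cmp$ with spurious trajectories that change the optimum. One must also verify that the curvature definition (Definition \ref{def:submodular_curvature}), taken over the larger ground set $\V$ rather than $U$, still yields $k^G = \beta$; this is where careful bookkeeping about dummy/skip elements (which should have zero marginal contributions, or be excluded from $\V$ entirely) is required. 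Once the gadget is set up so that the bijection $\C_\cmp \leftrightarrow \{X \subseteq U : |X| \le k\}$ is value-preserving and curvature-preserving, the rest of the argument is a routine reduction.
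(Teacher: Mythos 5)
Your proposal follows essentially the same route as the paper's proof: a reduction from the cardinality-constrained monotone \BP-maximization hardness of \citet{bai2018greed}, encoding the cardinality bound $k$ as the horizon of a deterministic CMP and lifting the set function to state--time pairs so that the dynamics constraint $\C_{\cmp}$ plays the role of the cardinality constraint.

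Two points in your sketch need repair, and both are resolved in the paper in the direction you only mention as a fallback. First, your preferred gadget --- ``at step $t$ the agent picks any not-yet-chosen element of $U$, plus a skip action'' --- is not realizable as a polynomial-size Markovian CMP: forbidding revisits requires the state to remember the set of already-chosen elements. The paper instead takes the \emph{fully connected} CMP on the ground set with $H=k$ and defines the lifted reward $F'(S_d)=F(\Pi S_d)$, where $\Pi$ drops the time coordinate; revisits then collapse under the projection, every subset of size at most $k$ is the projection of some admissible trajectory, and no skip element is needed (indeed, by monotonicity none is wanted, and a zero-marginal dummy would make the supermodular-curvature ratio degenerate --- exactly the bookkeeping worry you raise). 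Second, what you defer as the ``main obstacle'' --- that the lifted function is still \BP over the larger ground set $\V=\Sspace\times\T$ with the \emph{same} supermodular curvature --- is in fact the main technical content of the paper's argument: it verifies that $G'=G\circ\Pi$ is monotone, normalized and supermodular, computes $k^{G'}=k^{G}$ directly from the marginals, and shows that an optimizer of $F'$ projects to an optimizer of $F$, so an approximation factor better than $1-k^{G}+\epsilon$ for the GMDP instance would violate the base hardness. With these two steps carried out, your reduction coincides with the paper's; as written, the sketch identifies the right plan but leaves precisely the curvature-preservation and the realizability of the constraint gadget --- the only nontrivial parts --- unproved.
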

\end{tcolorbox}

The proof of \cref{theorem: hardness} can be found in \cref{sec:proof_hardness}. Crucially, theorem \ref{theorem: hardness} shows that the supermodular curvature plays a fundamental role in determining a lower bound on the approximability of the problem. 



\section{Experiments}
\label{sec:experiments}
\looseness -1 We present an experimental analysis of \GTO and \GPO on three tasks: \textit{i)} D-optimal experimental design, \textit{ii)} diverse and synergical trajectory selection, and \textit{iii)} safe state coverage. These environments have deterministic (\textit{i,iii}) and stochastic (\textit{ii}) transitions, consider global rewards with bounded curvature (\textit{i}) as well as fully-curved (\textit{ii,iii}), and cover submodular (\textit{i}), BP (\textit{ii}), and arbitrary global rewards (\textit{iii}).

\looseness -1 We compare the performances of the policy $\pi$ obtained via \GTO  (\cref{alg:global_traj_optim}) and \GPO (\cref{alg:GPO_algorithm}) with the performance of an optimal policy \wrt the additive objective $\J_m(\pi) = \EV_{\tau \sim p_\pi} \big[\sum_{v \in \tau} F(v) \big]$, which disregards interactions between states within the same trajectory, as it is the case in classic RL. Moreover, wherever possible\footnote{For computational reasons, we can solve it in deterministic environments with monotonic non-decreasing global rewards.}, we compare the performance of $\pi$ with the optimal non-Markovian policy \wrt $F$, and with SubPO~\cite{prajapat2023submodular}, a policy gradient method for optimizing under submodular rewards. 
We do not explicitly compare with other RL algorithms, since, SubPO is a representative baseline for Reinforce-type algorithms with variance reduction techniques catering to non-additive returns. 
Notably, for efficient performance, we run SubPO with policy parameterized with a neural network and thus it loses its approximation guarantees in contrast to \GPO. Moreover, we consider two variants of our algorithm $\GPO$-S and $\GPO$-greedy which build lower bounds based state-dependent and greedy permutation respectively (see \cref{sec:alternative_LB} for details).
We run all experiments over a grid environment with 400 states, discrete actions $\{$left, right, up, down, stay$\}$, and plot $95\%$ confidence intervals over $20$ runs.

\looseness -1 \mypar{Bayesian D-Optimal experimental design} In this task, the agent aims to optimally select a trajectory over sampling points to estimate an unknown function $f$ represented via Gaussian Processes~\cite{rasmussen2006gaussian}, a problem known as optimal experimental design~\cite{mutny2023active}. We aim to maximize the mutual information $I(y_\tau; f) = H(y_\tau) - H(y_\tau | f)$ between the observations $y_\tau$, and the unknown function $f$ evaluated at locations in $\tau$, which is a submodular global reward. 
In \cref{subfig:d_experimental_design_det}, we observe that \GTO performs significantly better than the optimal policy for the modularized objective, competitively with SubPO, and nearly optimally \wrt to the optimal non-Markovian policy.

\looseness -1 \mypar{Diverse synergies} In this task, we maximize the BP rewards $F(\tau) = |\bigcup_{s\in \tau} D^s| + \sum_{i=1}^K |\tau \cap S_i|^\beta$ with $S_i \subseteq \V \coloneqq \Sspace \times T$, in a stochastic environment, where with $0.1$ probability the agent transitions to a neighbouring state uniformly at random. This objective induces policies maximizing coverage over state space while seeking specific complementarity between states within the trajectory resembling, for instance, synergetic effects between atoms in a molecular design process. As hinted by \cref{theorem: hardness,theorem:guarantees_stochastic}, maximizing the fully-curved supermodular component of $F$ may perform arbitrarily poorly. Nonetheless, in \cref{subfig:diverse_synergies_stoch} we observe that \GPO performs significantly better than the modular optimal policy \wrt $F_m$. Moreover, in \cref{sec:experiments_details}, we show that in the deterministic setting, where we can compute the optimal non-Markovian policy, \GTO performs optimally.

\looseness -1 \mypar{Safe state coverage} In \cref{subfig:indicator_safe_cover_1_det}, we consider a non-monotone BP function encoding the task of safe state space coverage defined as $|\bigcup_{s\in \tau} D^s| + C \cdot \indicator \{\tau \cap S_{u} = 0\}$ with $D^s$ being a disk covering neighbouring states \citep{near_optimal_safe_cov}.
An optimal policy \wrt this objective is highly explorative while avoiding unsafe areas. Since this objective is invariant \wrt the time-dimension, we can leverage the lower bounds introduced in \cref{sec:alternative_LB}. 
We observe that the policy returned by \GTO performs significantly better than the optimal policy \wrt to the modularized objective $F_m$. This is due to the fact that the task embeds a notion of diversity that cannot be captured by local reward functions.

\subsection{Experimental Insights and Observations}
\label{sec:insights}
In the following, we aim to present several claims emerging from the experiments presented in Section \ref{sec:experiments} and in Appendix \ref{sec:experiments_details}.

\mypar{Practical versus theoretical performances}
\looseness -1 In the case of bounded-curvature reward functions, the theory presented in Section \ref{sec:guarantees} gives first-iteration performance guarantees for \GTO and \GPO. Nonetheless, in practice these algorithms can significantly improve the solution after the first iteration thus potentially reach significantly better performances than the ones captured via such curvature-based guarantees. The experimental results show that this fact is particularly true for submodular global reward functions where the objective value increases significantly over multiple iterations.

\mypar{Beyond bounded-curvature functions}
While the curvature assumption is needed to build theoretical guarantees, the proposed algorithms seem to show outstanding performances even with fully-curved reward functions. 

\mypar{Non-monotonic improvements in \GPO}
As pointed out in Section \ref{sec:method}, in the stochastic setting \GPO does not guarantee a monotonic performance improvement, but we still have a curvature-based performance guarantee as shown in Section \ref{sec:guarantees}. In the experiments above, we show that in practice \GPO shows promising performances even in settings with stochastic dynamics. Clearly, concentrating the estimates in \GPO requires sampling multiple trajectories thus increasing the computational complexity of the algorithm.

\mypar{Greedy submodular lower bounds} Observing the performances of greedy lower bounds (indicated with $-greedy$ in Fig. \ref{fig:label} and in Appendix \ref{sec:experiments_details}), we can notice that they outperform the normal counterpart in certain instances. In particular, a further analysis reported in Fig. \ref{fig:cover_1_traj} within Appendix \ref{sec:experiments_details}, shows that this is the case when the horizon is neither too large nor too small, but just enough to properly maximize a certain submodular reward via a specific trajectory (or policy), as shown in Figure \ref{fig:cover_1_traj} (right). Meanwhile, when the horizon is arbitrarily small, the two lower bounds seem to perform equally well. This is due to the fact that in this latter case random actions will also perform well as there is less chance of overlapping states and therefore better planning strategies cannot show performance improvements.

\section{Related Work}
\label{sec:related_works}
\looseness -1 \mypar{Convex \RL}
Convex RL (CRL)~\cite{hazan2019maxent,zahavy2021reward} is a recent framework that extends RL to non-additive rewards by optimizing convex functionals of state(-action) distributions. It can capture a wide range of applications including exploration~\cite{hazan2019maxent}, experimental design~\cite{Mutny2023}, imitation learning~\cite{state_marginal}, and risk-averse RL~\cite{garcia2015comprehensive, mutti2022challenging}. Interestingly, also common CRL algorithms reduce the problem to solving a sequence of MDPs. Moreover, notice that several CRL objectives, \eg entropy maximization and experimental design, can be cast as global rewards as shown in \cref{table:GRLapplications_all}. Furthermore, while (mixtures of) Markovian stationary policies are sufficient to optimize CRL objectives, Global RL leverages Markovian non-stationary policies, a fact that manifests the need of more general policy classes to optimize non-additive rewards in finite-samples settings, as shown by~\citet{mutti2022importance}. Further, similar to some Convex RL works \cite{tarbouriech2019active, zahavy2021reward}, we believe it is possible, and an interesting direction of future work, to extend \GPO to the case of unknown dynamics, via an optimistic estimate of $P$.

\looseness -1 \mypar{RL with Non-Markovian Rewards}
The concepts of global rewards and non-additive rewards presented within this work have a strong link with the notion of non-Markovian rewards, namely history-dependent reward functions. Non-Markovian rewards in RL has been studied from the lenses of logic and formal language theory~\cite{gaon2020reinforcement, de2013linear, brafman2018ltlf}. Currently, the analysis within these works seems orthogonal to ours; however, exploring connections between these two approaches is an interesting research direction.

\looseness -1 \mypar{Submodularity in Decision Making}
Submodular functions have been used to model problems in active machine learning~\cite{krause2014submodular,bilmes2022submodularity}, bandits~\citep{yue2011linear, Chen2017InteractiveSB, gabillon2013adaptive}, planning~\citep{chekuri_rg,wang2020planning} and RL~\cite{prajapat2023submodular}. The most related to us is from \citet{prajapat2023submodular} which introduces Submodular MDPs and proposes a policy gradient (PG) based method for it. Beyond the larger generality of the function classes we consider, the main difference is that we transport an algorithmic scheme from submodular optimization tailored for BP function maximization to MDPs, rather than using a function-agnostic optimization scheme like PG.

\looseness -1 \mypar{BP function maximization}
Historically, submodular and supermodular function maximization have been treated with significantly different approaches \citep{krause2014submodular, iwata2008submodular}. Recently, novel optimization schemes based on discrete subgradients~\cite{iyer2013fast, iyer2015polyhedral} made it possible to treat submodular and supermodular function maximization in a unified manner, directly optimize suBmodular-suPermodular (BP) functions \citep{bai2018greed}, as well as optimize arbitrary set functions with a known decomposition into a submodular and a supermodular component. 
In \cref{sec:guarantees} we extend hardness results for BP function maximization to GRL by shedding light on the effect of the transition dynamics underlying the Markov process. This removes the possibility of \emph{teleporting} from any element of the set to any other element, which we capture in \cref{sec:grl_as_co} via the notion of dynamics constraint (see \cref{sec:dynamics_constraint}).
\section{Conclusion}
\label{sec:conclusions}
\looseness -1 We introduce a novel problem formulation denoted {\em Global} RL (GRL), for sequential decision-making under non-additive objectives. Then, leveraging tools from submodular optimization, we provide a novel algorithmic scheme that converts the GRL problem to a sequence of standard RL planning problems. We identify structural properties that render the problem efficiently and approximately solvable for a wide variety of applications according to curvature-based approximation guarantees. Moreover, we derive a hardness result for a representative class of GRL and extensively showcase the empirical performances of the proposed algorithms to solve RL problems that cannot be captured via classic additive (or local) rewards.

\section*{Acknowledgement}
This publication was made possible by the ETH AI Center doctoral fellowship to Riccardo De Santi and Manish Prajapat. We would like to thank Mohammad Reza Karimi and Mojm\'ir Mutn\'y for the
insightful discussions.

The project has received funding from the European Research Council (ERC) under the European
Union’s Horizon 2020 research, innovation program grant agreement No 815943 and the Swiss
National Science Foundation under NCCR Automation grant agreement 51NF40 180545 and NCCR Catalysis grant number 180544. 

\section*{Impact Statement}
This paper presents work whose goal is to advance the field of Reinforcement Learning. There are many potential societal consequences of our work, none which we feel must be specifically highlighted here.

\bibliography{biblio.bib}
\bibliographystyle{icml2024}

\newpage
\begin{appendix}
\onecolumn

\newpage

\section{List of symbols}
\label{sec:notation}
    \begin{longtable}{lll}
        \multicolumn{3}{c}{\underline{\textbf{General Mathematical Objects}}} \\
        \noalign{\vskip 1mm}
             $[N]$ & $\triangleq$ & Set of integers $\{ 1, \ldots, N \}$ \\
             $\Delta(X)$ & $\triangleq$ & Probability simplex over $X$ \\
        \noalign{\vskip 3mm}
        \multicolumn{3}{c}{\underline{\textbf{Controlled Markov Process}}} \\
        \noalign{\vskip 1mm}
             $\cmp$ & $\triangleq$ & Controlled Markov Process (CMP), $\cmp = \langle\Sspace, \Aspace, P, \mu, H\rangle$ \\
             $\Sspace$ & $\triangleq$ & State space \\
             $\Aspace$ & $\triangleq$ & Action space \\
             $P$ & $\triangleq$ & Transition model, $P: \Sspace \times \Aspace \to \Delta(\Sspace)$ \\
             $\mu$ & $\triangleq$ & Initial state distribution, $\mu \in \Delta(\Sspace)$ \\
             $H$ & $\triangleq$ & Horizon \\
             $\T$ & $\triangleq$ & Time steps set, $\T = [H]$ \\
             $s_0$ & $\triangleq$ & Initial state $s_0 \sim \mu$ \\
             $s_t$ & $\triangleq$ & State at time step $t$\\
             $\tau$ & $\triangleq$ & Trajectory $\tau = \{s_0, \ldots, s_{H-1}\}$ \\
             $p_\pi(\tau)$ & $\triangleq$ & Probability of trajectory $\tau$ given a fixed $\cmp$ and $\pi$, see equation \eqref{eq:prob_trajectory} \\
        \noalign{\vskip 3mm}
        \multicolumn{3}{c}{\underline{\textbf{MDP and GMDP}}} \\
        \noalign{\vskip 1mm}
             $\cmp_r$ & $\triangleq$ & Markov Decision Process (MDP), $\cmp_r \coloneqq \langle\Sspace, \Aspace, P, \mu, H, r\rangle$ \\
             $r$ & $\triangleq$ & Scalar reward function $r:\Sspace \to \Reals$ or $r: \Sspace \times \Aspace \to \Reals$ \\
             $\gmdp$ & $\triangleq$ & Global Markov Decision Process, $\gmdp := \langle\Sspace, \Aspace, P, \mu, H, F\rangle$ \\
             $F$ & $\triangleq$ & Global reward function $F: \Gamma_H \to \Reals$ \\
             $\mathcal{C}_{\mathcal{M}}$ & $\triangleq$ & CMP constraint, see definition \ref{def: cmp_constraint}\\
        \noalign{\vskip 3mm}
        \multicolumn{3}{c}{\underline{\textbf{Policies}}} \\
        \noalign{\vskip 1mm}
             $\PimS$ & $\triangleq$ & Class of Markovian stationary policies \\
             $\PimNs$ & $\triangleq$ & Class of Markovian non-stationary policies \\
             $\Pinm$ & $\triangleq$ & Class of non-Markovian policies \\
             $\Hspace_{t}$ & $\triangleq$ & History space until step $t$ \\
             $\pi$ & $\triangleq$ & Markovian policy $\pi : \Sspace \to \Delta(\Aspace)$ or non-Markovian policy $\pi : \Hspace_{t} \to \Delta (\Aspace)$\\
             $\pi_t$ & $\triangleq$ & Markovian Non-stationary policy $\pi_t : \Sspace \times \T \to \Delta(\Aspace)$\\
        \noalign{\vskip 3mm}
        \multicolumn{3}{c}{\underline{\textbf{Combinatorial Structure and Submodular Optimization}}} \\
        \noalign{\vskip 1mm}
            $V$ & $\triangleq$ & Ground set of elements \eg $V = [N]$ \\
            $\F$ & $\triangleq$ & Family of subsets induced by a ground set \eg $\F :=\{0,1\}^V = 2^V$\\
            $F$ & $\triangleq$ & Pseudo-Boolean set-function $F: 2^V \to \Reals$\\
            $k_F$ & $\triangleq$ & Submodular curvature, see \cref{def:submodular_curvature} \\
            $k^F$ & $\triangleq$ & Supermodular curvature, see \cref{def:submodular_curvature} \\
        \noalign{\vskip 3mm}
        \multicolumn{3}{c}{\underline{\textbf{Algorithms: \GTO and \GPO}}} \\
        \noalign{\vskip 1mm}
        $\Rmath(\pi)$ & $\triangleq$ & Non-additive suboptimality gap of policy $\pi$, see equations (\ref{def:suboptimality_gap}, \ref{def:suboptimality_gap_traj})\\
    \end{longtable}
\newpage

\section{Dynamics constraints}
\label{sec:dynamics_constraint}
Towards interpreting GRL as a CO problem, we introduce a type of constraint set that we call \emph{dynamics constraint}, and a trajectory-optimization version of GRL.
But first, we need to introduce the following auxiliary notion.
\begin{restatable}[Time-extended CMP]{definition}{timeExtendedCMP}
    Given a CMP $\mathcal{M} \coloneqq \langle\Sspace, \Aspace, P, H, \mu \rangle$, we call time-extended CMP, the new CMP $\mathcal{M}_l$ defined by $\mathcal{M}_l \coloneqq \langle V \coloneqq \Sspace \times [l] , \Aspace, P_l, H, \mu\rangle$, which is a $l$-layered CMP where $P_l$ is defined as:
     \[
    P_l((s',t') \mid (s,t), a)  =
    \begin{cases}
    P(s' \mid s, a) & \text{if $t'=t+1$} \\
    0 & \text{otherwise} 
    \end{cases}
    \]
\end{restatable}
Now we can define the dynamics constraint $\mathcal{C}_{\mathcal{M}}$ as follows.
\begin{restatable}[Dynamics constraint $\mathcal{C}_{\mathcal{M}}$]{definition}{CMPConstraint}
\label{def: cmp_constraint}
    Given a CMP $\mathcal{M} \coloneqq \langle \Sspace, \Aspace, P, H, \mu \rangle$, we consider the time-extended CMP $\mathcal{M}_H \coloneqq \langle V \coloneqq \Sspace \times \T , \Aspace, P_H, H, \mu \rangle$ and define $\mathcal{C}_{\mathcal{M}}$ as:
    \begin{equation*}
        \mathcal{C}_{\mathcal{M}} \coloneqq \{ \tau \subseteq V \mid \tau \text{ is a path in } \mathcal{G}_P \}
    \end{equation*}
    where $\mathcal{G}_P = (V, E)$ is the graph induced by $P_H$ where $e = ((s,t),(s',t')) \in E \iff \exists a \in \Aspace \text{ s.t. } P_H((s',t') \mid (s,t), a))>0$.
\end{restatable}
Notice we can interpret a dynamics constraint $\mathcal{C}_{\mathcal{M}}$ as the set of admissible trajectories of a CMP $\mathcal{M}$. 

\section{Proofs}
\label{sec:proofs}
\subsection{Proofs for Section \ref{sec:general_utilities}}
\equivalence*
\begin{proof}
For the sake of clarity, without loss of generality, we consider an empirical distribution defined over $\Sspace \times \T$. Notice that this is indeed general since in the case the original instance does not take into account the time dimension, \eg optimizes over distributions $d \in \Delta(\Sspace)$, then it is sufficient to define a functional $\F'$ marginalizing the input distribution, \eg $d' \in \Delta(\Sspace \times \T)$ over the time dimension. Moreover, by considering trajectories composed of state-action pairs rather than only states, it is trivial to extend the following proof to the case of distributions over $\Sspace \times \Aspace \times \T$.
Given an instance of the Single Trial General Utilities RL, namely:
\begin{equation}
\label{single_trial_general_utilities_rl_apx}
    \max_{\pi \in \Pi} \;\; \Big( \EV_{d \sim p^\pi} \big[ \F(d) \big] \Big)
\end{equation}
we  show how to build an equivalent instance of the GRL problem, namely:
\begin{equation}
\label{eq:global_reinforcement_learning_apx}
    \max_{\pi \in \Pi} \;\;  \EV_{\tau \sim p_\pi} \big[ F(\tau) \big] 
\end{equation}
such that the two problems are equivalent. In particular, we prove a stronger result than equivalence on the set of maximizers. We show that the objective functions are equal, formally:
$$\EV_{d \sim p^\pi} \big[ \F(d) \big] = \EV_{\tau \sim p_\pi} \big[ F(\tau) \big]  \quad \forall \pi \in \Pi$$
We build the instance of the GRL problem as follows. First, we realize that every empirical distribution $d \sim p^\pi$ must be induced by a trajectory $\tau$ following policy $\pi$ due to the definition of $p^\pi$. Hence, for the sake of clarity, we express a distribution of this type as $d_\tau$. As a consequence, for every empirical distribution $d_\tau$ we can define the trajectory $\tau$ inducing that distribution. This is trivial since $d_\tau$ is fundamentally an indicator function over $\Sspace \times \Aspace \times \T$ such that $d_\tau(s,t) = \mathbf{1}_{\tau(t) = (s)}$, where $\tau(t)$ represent the state visited within the trajectory $\tau$ at time step $t$. Since there exists a bijection between the space of trajectories $\tau$ and the space of empirical distributions $d_\tau$, we can express the probability of sampling a certain trajectory $\tau$, namely $p_\tau$ as follows:
\begin{equation}
\label{eq:p_equality}
    p_\pi(\tau) \coloneqq p^\pi(d_\tau)
\end{equation}
For the sake of clarity, we prove Equation \eqref{eq:p_equality} explicitly by leveraging the definitions of $p_\pi(\tau)$ and $p^\pi(d_\tau)$. In particular, given a policy $\pi$, we want to prove that for any trajectory $\tau$ it holds that $p_{\pi}(\tau) = p^{\pi}(d_{\tau})$. Consider a trajectory $\tau = \{((s_0,0),a_0),((s_1,1),a_1),\dots, ((s_t,t),a_t),\dots, ((s_{H-1}, H-1),a_{H-1})\}$ and the empirical distribution $d_{\tau} \in \Delta(\V)=\Delta(\Sspace \times \T)$. Consider the function $d_{\tau}^t = (s,t) \in \V : d_{\tau}>0$, which returns the state-time pair $(s,t)$ visited by trajectory $\tau$ in time-step $t$. Now we can express $p_\pi(\tau)$ and $p^\pi(d_\tau)$ as follows:
\begin{align}
    p_\pi(\tau) &= \mu((s_0,0))\prod_{t=0}^{H-1} P((s_{t+1}, t+1) | (s_t, t), a_t) \pi(a_t | (s_t,t))\\
    p^\pi(d_{\tau}) &= \mu(d^0_{\tau})\prod_{t=0}^{H-1} P(d^{t+1}_{\tau}| d^{t}_{\tau}, a_t) \pi(a_t | d^{t}_{\tau})
\end{align}
Crucially, it is immediate to notice that, due to the definition of $d_{\tau}^t \forall t \in \T$ we have that:
\begin{align*}
    \mu((s_0,0)) &= \mu(d^0_{\tau})\\
    P((s_{t+1}, t+1) | (s_t, t), a_t) &= P(d^{t+1}_{\tau}| d^{t}_{\tau}, a_t)\\
    \pi(a_t | (s_t,t)) &= \pi(a_t | d^{t}_{\tau})
\end{align*}
Ultimately, we define the GRL global reward $F$ as:
\begin{equation*}
    F(\tau) = \F(d_\tau)
\end{equation*}
Hence, by construction of the GRL instance, we have that $\forall \pi \in \Pi$:
\begin{align*}
    \EV_{d \sim p^\pi} \big[ \F(d) \big] &= \sum_{d} \F(d) p^\pi(d)\\
    &= \sum_{d_\tau} \F(d_\tau) p^\pi(d_\tau)\\
    &= \sum_{d_\tau} \F(d_\tau) p_\pi(\tau)\\
    &= \sum_{\tau} F(\tau) p_\pi(\tau)\\
    &= \EV_{\tau \sim p_\pi} \big[ F(\tau) \big]
\end{align*}
Intuitively, it is likely possible to show also the other direction of the inclusion, \ie GRL $\subseteq$ Single Trial General Utilities RL, and therefore that the two problem classes are equivalent. Although interesting, this is not necessary in order to prove the theorem stated.
\end{proof}

\subsection{Proofs for Section \ref{sec:guarantees}}
\guaranteeStochastic*
\begin{proof}
    The proof is composed of three sequential parts proving each one of the three cases independently. We first show that $\mathcal{J}(\pi_1) \geq (1-k_F)\mathcal{J}(\pi^*)$ where $\J(\pi) = \EV_{\tau \sim \pi}\big[F(\tau)\big]$. Once this is proved, the theorem statement can be trivially derived by using the definition of $\V(\pi)$.
    \begin{align*}
        \mathcal{J}(\pi_1) &\labelrel\geq{step:stoch_sub_1} m^{\sigma, E}_{\pi_0}(\pi_1) && \tag{lower bound}\\
        &\labelrel\geq{step:stoch_sub_2} m^{\sigma, E}_{\pi_0}(\pi^*) && \tag{$\pi_1$ maximizer of $m^{\sigma, E}_{\pi_0}$}\\
        &\labelrel={step:stoch_sub_3} \EV_{\tau^* \sim \pi^*} \bigg[ \EV_{\tau_0 \sim \pi_0} \bigg[ m^\sigma_{\tau_0}(\tau^*)\bigg] \bigg] && \tag{def. $m^{\sigma, E}_{\pi_0}(\pi^*)$}\\
        &= \EV_{\tau^* \sim \pi^*} \bigg[ \EV_{\tau_0 \sim \pi_0} \bigg[\sum_{x \in \tau^*} m^\sigma_{\tau_0}(I_{\tau_0}(x))\bigg] \bigg]\\
        &= \EV_{\tau^* \sim \pi^*} \bigg[ \EV_{\tau_0 \sim \pi_0} \bigg[\sum_{x \in \tau^*} F\big(S^{\sigma(\tau_0)}_{I_{\tau_0}(x)}\big) - F\big(S^{\sigma(\tau_0)}_{I_{\tau_0}(x)-1}\big) \bigg] \bigg]\\
        &=\EV_{\tau^* \sim \pi^*} \bigg[ \EV_{\tau_0 \sim \pi_0} \bigg[ \sum_{x \in \tau^*} F(\{\sigma(\tau_0)[1], \ldots, \sigma(\tau_0)[I_{\tau_0}(x)]\}) - F(\{\sigma(\tau_0)[1], \ldots, \sigma(\tau_0)[I_{\tau_0}(x)-1]\}) \bigg] \bigg]\\
        &=\EV_{\tau^* \sim \pi^*} \bigg[ \EV_{\tau_0 \sim \pi_0} \bigg[ \sum_{x \in \tau^*} F(\sigma(\tau_0)[I_{\tau_0}(x)] \mid \{\sigma(\tau_0)[1], \ldots, \sigma(\tau_0)[I_{\tau_0}(x)-1]\}) \bigg] \bigg]\\
        &\geq (1-k_F)\EV_{\tau^* \sim \pi^*} \bigg[ \EV_{\tau_0 \sim \pi_0} \bigg[ \sum_{x \in \tau^*} F(\sigma(\tau_0)[I_{\tau_0}(x)]) \bigg] \bigg]\\
        &\labelrel={step:stoch_sub_4} (1-k_F)\EV_{\tau^* \sim \pi^*} \bigg[ \EV_{\tau_0 \sim \pi_0} \bigg[ \sum_{x \in \tau^*} F(x) \bigg] \bigg]\\
        &\geq (1-k_F)\EV_{\tau^* \sim \pi^*} \bigg[ \EV_{\tau_0 \sim \pi_0} \bigg[ F(\tau^*) \bigg] \bigg]\\
        &\labelrel\geq{step:stoch_sub_5} (1-k_F)\EV_{\tau^* \sim \pi^*} \bigg[ F(\tau^*) \bigg]\\
        &= (1-k_F)\mathcal{J}(\pi^*)
    \end{align*}
We define $I_{\tau}(x)$ as the function returning the index of element $x$ in trajectory $\tau$. This proof is heavily based on the proof of theorem \ref{theorem:guarantee_submodular_deterministic}. Analogously, in step \eqref{step:stoch_sub_1} we use the fact that $m^{\sigma, E}_{\pi_0}(\pi_1)$ is by construction a lower bound of $\mathcal{J}(\pi_1)$, in step \eqref{step:stoch_sub_2} we notice that $\pi_1$ is a maximizer of $m^{\sigma, E}_{\pi_0}(\cdot)$ due to the local optimization step in \GPO, while in step \eqref{step:stoch_sub_3} we leverage the definition of $m^{\sigma, E}_{\pi_0}(\pi^*)$. In step \eqref{step:stoch_sub_4} we leverage the notion of submodular curvature (Definition \ref{def:submodular_curvature}), and in step \eqref{step:stoch_sub_5} we exploit submodularity of $F$
 (Definition \ref{def:submodularity}).

    We now prove the second statement. This proof is heavily based on the proof of theorem \ref{theorem:guarantee_supermodular_deterministic}. We first show that $\mathcal{J}(\pi_1) \geq \frac{(k^F)^2 - 3k^F + 1}{1-k^F}\mathcal{J}(\pi^*)$ where $\J(\pi) = \EV_{\tau \sim \pi}\big[F(\tau)\big]$. Once this is proved, the theorem statement can be trivially derived by using the definition of $\V(\pi)$.
    \begin{align}
        \mathcal{J}(\pi_1) &\labelrel\geq{step:stoch_sup_1} m^{E}_{\pi_0}(\pi_1)\nonumber\\
        &\labelrel\geq{step:stoch_sup_2} m^{E}_{\pi_0}(\pi^*)\nonumber\\
        &\labelrel={step:stoch_sup_3} \EV_{\tau^* \sim \pi^*} \bigg[ \EV_{\tau_0 \sim \pi_0} \bigg[ m_{\tau_0}(\tau^*)\bigg] \bigg] \nonumber\\
        &\labelrel\geq{step:stoch_sup_4} \EV_{\tau^* \sim \pi^*} \bigg[ \EV_{\tau_0 \sim \pi_0} \bigg[ (1-k^F)F(\tau^*) - \frac{k^F}{1-k^F}\sum_{j \in \tau_0 \backslash \tau^*}F(j) \bigg] \bigg] \nonumber\\
        &\labelrel\geq{step:stoch_sup_5} (1-k^F)\EV_{\tau^* \sim \pi^*}\bigg[ \EV_{\tau_0 \sim \pi_0} \bigg[ F(\tau^*)\bigg]\bigg] - \frac{k^F}{1-k^F} \EV_{\tau^* \sim \pi^*}\bigg[ \EV_{\tau_0 \sim \pi_0} \bigg[ \sum_{j \in \tau_0 \backslash \tau^*} F(j) \bigg]\bigg] \label{eq: middle_step_apx_stoch_sup} 
    \end{align}
    In step \eqref{step:stoch_sup_1} we use the fact that $m^{E}_{\pi_0}(\pi_1)$ is by construction a lower bound of  $\mathcal{J}(\pi_1)$, in step \eqref{step:stoch_sup_2} we notice that $\pi_1$ is a maximizer of $m^{E}_{\pi_0}(\cdot)$ due to the local optimization step in \GPO, while in step \eqref{step:stoch_sup_3} we leverage the definition of $m^{E}_{\pi_0}(\pi^*)$. Meanwhile, in step \eqref{step:stoch_sub_4}, we trivially follow steps  \eqref{step:det_sup_1} to  \eqref{step:det_sup_1} of the proof of theorem \ref{theorem:guarantee_supermodular_deterministic}. Now, we lower bound the second term of equation \ref{eq: middle_step_apx_stoch_sup} (without the preceding constant component) as follows:
    \begin{align}
        - \EV_{\tau^* \sim \pi^*}\bigg[ \EV_{\tau_0 \sim \pi_0} \bigg[ \sum_{j \in \tau_0 \backslash \tau^*} F(j) \bigg]\bigg] &\geq - \EV_{\tau^* \sim \pi^*}\bigg[ \EV_{\tau_0 \sim \pi_0} \bigg[ F(\tau_0) \bigg]\bigg] \nonumber\\
        &= -\EV_{\tau_0 \sim \pi_0} \bigg[ F(\tau_0) \bigg] \nonumber\\
        &= -\mathcal{J}(\pi_0) \nonumber\\
        &\geq - J(\pi^*) \label{eq: side_step_apx_stoch_sup}
    \end{align}
    By plugging the result in eq. \ref{eq: side_step_apx_stoch_sup} into equation \ref{eq: middle_step_apx_stoch_sup}, we obtain:
    \begin{align*}
        \mathcal{J}(\pi_1) &\geq (1-k^F)\mathcal{J}(\pi^*) - \frac{k^F}{1-k^F}\mathcal{J}(\pi^*)\\
        &= \frac{(k^F)^2 - 3k^F + 1}{1-k^F} \mathcal{J}(\pi^*)
    \end{align*}
    where we have employed the definition of $\mathcal{J}(\pi^*)$.
    Notice that a critical aspect of this proof is that we cannot compare $F(\tau^*)$ and $F(\tau)$ as we did for the proofs in the deterministic setting, since in the stochastic setting the notion of optimality is defined over the policy space and therefore we can only compare 
    \begin{equation*}
        \mathcal{J}(\pi^*) = \EV_{\tau^* \sim \pi^*} \bigg[ F(\tau^*)\bigg] \geq \EV_{\tau \sim \pi} \bigg[ F(\tau) \bigg] = \mathcal{J}(\pi) \quad \forall \pi \in \Pi
    \end{equation*}
    We now prove the third statement. First, notice that:
    \begin{align}
        \J(\pi) &= \EV_{\tau \sim \pi} \bigg[ F(\tau)\bigg] \nonumber\\
        &= \EV_{\tau \sim \pi} \bigg[ Q(\tau) + G(\tau)\bigg] \nonumber\\
        &= \EV_{\tau \sim \pi} \bigg[ Q(\tau)\bigg] + \EV_{\tau \sim \pi} \bigg[ G(\tau)\bigg] \nonumber\\
        &= {\J}_Q(\pi) + {\J}_G(\pi) \label{eq:decomposition_J}
    \end{align}
    where we define:
    \begin{equation*}
        {\J}_Q \coloneqq \EV_{\tau \sim \pi} \bigg[ Q(\tau)\bigg] \quad\quad {\J}_G \coloneqq \EV_{\tau \sim \pi} \bigg[ G(\tau)\bigg]
    \end{equation*}
    and:
    \begin{equation*}
        \pi^* \in \argmax_\pi \J(\pi) \quad \pi^*_Q \in \argmax_\pi {\J}_Q(\pi) \quad \pi^*_G \in \argmax_\pi {\J}_G(\pi)
    \end{equation*}
     Given these quantities, we can proceed with the core of the proof. We first show that $\mathcal{J}(\pi_1) \geq \alpha \cdot \mathcal{J}(\pi^*)$ for the value of $\alpha$ stated within the theorem. From this, we deduce that $\mathcal{J}(\hat{\pi}) \geq \alpha \cdot \mathcal{J}(\pi^*)$ since further policy updates are performed only if they don't worsen the policy performance according to $\mathcal{J}$.
    \begin{align}
        \J(\pi_1) &\labelrel={step:stoch_bp_1} {\J}_Q(\pi_1) + {\J}_G(\pi_1) \nonumber\\
        &\labelrel\geq{step:stoch_bp_2} m_{\pi_0}^{\sigma, E}(\pi_1) + m_{\pi_0}^{E}(\pi_1) \nonumber\\
        &\labelrel\geq{step:stoch_bp_3} m_{\pi_0}^{\sigma, E}(\pi^*) + m_{\pi_0}^{E}(\pi^*) \nonumber\\
        &\labelrel={step:stoch_bp_4} \EV_{\tau^* \sim \pi^*} \bigg[ \EV_{\tau_0 \sim \pi_0} \bigg[ m^\sigma_{\tau_0}(\tau^*)\bigg] \bigg] + \EV_{\tau^* \sim \pi^*} \bigg[ \EV_{\tau_0 \sim \pi_0} \bigg[ m_{\tau_0}(\tau^*)\bigg] \bigg] \nonumber\\
        &= \EV_{\tau^* \sim \pi^*} \bigg[ \EV_{\tau_0 \sim \pi_0} \bigg[ m^\sigma_{\tau_0}(\tau^*) + m_{\tau_0}(\tau^*)\bigg] \bigg] \nonumber\\
        &\labelrel\geq{step:stoch_bp_5} \EV_{\tau^* \sim \pi^*} \bigg[ \EV_{\tau_0 \sim \pi_0} \bigg[ (1-k_Q)Q(\tau^*) + m_{\tau_0}(\tau^*)\bigg] \bigg] \nonumber\\
        &= (1-k_Q){\J}_Q(\pi^*) + \EV_{\tau^* \sim \pi^*} \bigg[ \EV_{\tau_0 \sim \pi_0} \bigg[ m_{\tau_0}(\tau^*)\bigg] \bigg] \label{eq: middle_step_BP_stoch}
    \end{align}
     In step \eqref{step:stoch_bp_1} we use equation \eqref{eq:decomposition_J}, in step \eqref{step:stoch_bp_2} we use the fact that $m_{\pi_0}^{\sigma, E}(\pi_1)$ and $m_{\pi_0}^{E}(\pi_1)$ are by construction respectively lower bounds of ${\J}_Q(\pi_1)$ and ${\J}_G(\pi_1)$. In step \eqref{step:stoch_bp_3}, we notice that $\pi_1$ is a maximizer of the sum of marginal lower bounds, namely $m_{\pi_0}^{\sigma, E}(\pi^*) + m_{\pi_0}^{E}(\pi^*)$ due to the local optimization step in \GPO. Meanwhile in step \eqref{step:stoch_bp_4} we leverage the definition of $ m_{\pi_0}^{\sigma, E}(\pi^*)$ and $m_{\pi_0}^{E}(\pi^*)$, and step \eqref{step:stoch_bp_5} can be trivially derived by following steps \eqref{step:det_sub_2} to \eqref{step:det_sub_4} used to prove theorem \ref{theorem:guarantee_submodular_deterministic}.
    Now we can lower bound the second term of equation \ref{eq: middle_step_BP_stoch} as:
    \begin{align}
        \EV_{\tau^* \sim \pi^*} \bigg[ \EV_{\tau_0 \sim \pi_0} \bigg[ m_{\tau_0}(\tau^*)\bigg] \bigg] &\labelrel\geq{step:stoch_bp_6} \EV_{\tau^* \sim \pi^*} \bigg[ \EV_{\tau_0 \sim \pi_0} \bigg[ (1-k^G)G(\tau^*) - \frac{k^G}{1-k^G}\sum_{j \in \tau_0 \backslash \tau^*}G(j) \bigg] \bigg] \nonumber\\
        &= (1-k^G){\J}_G(\pi^*) - \frac{k^G}{1-k^G} \EV_{\tau^* \sim \pi^*} \bigg[ \EV_{\tau_0 \sim \pi_0} \bigg[ \sum_{j \in \tau_0 \backslash \tau^*} G(j)\bigg] \bigg] \label{eq: side_step_BP_stoch}
    \end{align}
    where step \eqref{step:stoch_bp_6} can be trivially derived by following steps \eqref{step:det_sup_2} to \eqref{step:det_sup_4} used to prove theorem \ref{theorem:guarantee_supermodular_deterministic}. Now we can lower bound the second term of equation \ref{eq: side_step_BP_stoch} (without the preceding constant component) as follows:
    \begin{align}
        - \EV_{\tau^* \sim \pi^*} \bigg[ \EV_{\tau_0 \sim \pi_0} \bigg[ \sum_{j \in \tau_0 \backslash \tau^*} G(j)\bigg] \bigg] &\labelrel\geq{step:stoch_bp_7} - \EV_{\tau^* \sim \pi^*} \bigg[ \EV_{\tau_0 \sim \pi_0} \bigg[ G(\tau_0)\bigg] \bigg] \nonumber\\
        &= -{\J}_Q(\pi_0) \nonumber\\
        &\geq -{\J}_Q(\pi_G^*) \nonumber\\
        &\labelrel\geq{step:stoch_bp_8} -\J(\pi^*) \label{eq: last_step_BP_stoch}
    \end{align}
    using in step \eqref{step:stoch_bp_7} the fact that $(\tau_0 \backslash \tau^*) \subset \tau_0$ and supermodularity of G, while the step \eqref{step:stoch_bp_8} is due to the following trivial chain of inequalities:
    \begin{align*}
        \J(\pi^*) &= \EV_{\tau^* \sim \pi^*}\bigg[ F(\tau^*) \bigg]\\
        &\geq \EV_{\tau^*_G \sim \pi^*_G}\bigg[ F(\tau_G^*) \bigg]\\
        &= \EV_{\tau^*_G \sim \pi^*_G}\bigg[ Q(\tau_G^*) \bigg] + \EV_{\tau^*_G \sim \pi^*_G}\bigg[ G(\tau_G^*) \bigg]\\
        &\geq {\J}_G(\pi_G^*) \tag{Q non-negative}
    \end{align*}
    By plugging equation \ref{eq: last_step_BP_stoch} into equation \ref{eq: side_step_BP_stoch} and then equation \ref{eq: side_step_BP_stoch} into equation \ref{eq: middle_step_BP_stoch} we obtain:
    \begin{align}
        \J(\pi_1) &\geq (1-k_Q){\J}_Q(\pi^*) + (1-k^G){\J}_G(\pi^*) - \frac{k^G}{1-k^G}\J(\pi^*) \nonumber\\
        &= (1-k_Q){\J}_Q(\pi^*) + (1-k^G){\J}_G(\pi^*) - \frac{k^G}{1-k^G}\bigg[ {\J}_Q(\pi^*) + {\J}_G(\pi^*)\bigg] \nonumber\\
        &= \frac{(1-k_Q)(1-k^G) - k^G}{1-k^G}{\J}_Q(\pi^*) + \frac{(1-k^G)^2-k^G}{1-k^G}{\J}_Q(\pi^*) \label{eq: final_BP_stoch}
    \end{align}
    and by trivially lower bounding equation \ref{eq: final_BP_stoch} depending on the value of $k_Q$ and $k^G$ leads to the theorem statement.
\end{proof}

\subsection{Auxiliary Lemmas}
\begin{lemma}[Supermodular telescoping bound]
\label{lemma:telescoping_supermodular}
Consider a supermodular function $F$ defined over the ground set $V$. For every set $S \subseteq V$ we have
    \begin{equation*}
        \sum_{j \in S} F(j) \geq (1-k^F)F(S)
    \end{equation*}
\end{lemma}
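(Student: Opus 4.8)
The inequality is the supermodular counterpart of the standard telescoping/curvature bound for submodular functions, and I would prove it by a telescoping expansion of $F(S)$ followed by two pointwise estimates. Throughout I use that the supermodular function at hand is normalized ($F(\emptyset)=0$) and monotone non-decreasing — the regime in which the lemma is invoked, namely the component $G$ of a \BP decomposition — so that $F(v\mid A)\ge 0$ for every $v\in V$ and $A\subseteq V\setminus\{v\}$.

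First I would fix an arbitrary enumeration $S=\{j_1,\dots,j_k\}$, set $S_0:=\emptyset$ and $S_i:=\{j_1,\dots,j_i\}$, and telescope:
\begin{equation*}
F(S)=\sum_{i=1}^{k}\big(F(S_i)-F(S_{i-1})\big)=\sum_{i=1}^{k}F(j_i\mid S_{i-1}).
\end{equation*}
Next, since $S_{i-1}\subseteq V\setminus\{j_i\}$ and $F$ is supermodular (marginal gains are monotone non-decreasing in the conditioning set), I get $F(j_i\mid S_{i-1})\le F(j_i\mid V\setminus\{j_i\})$ for each $i$. The final ingredient is the definition of the supermodular curvature, which rearranges to $1-k^F=\min_{v\in V}\tfrac{F(v)}{F(v\mid V\setminus\{v\})}$, hence $F(v\mid V\setminus\{v\})\le \tfrac{1}{1-k^F}\,F(v)$ for every $v$ (when $k^F<1$). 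Chaining these:
\begin{equation*}
F(S)\;\le\;\sum_{i=1}^{k}F(j_i\mid V\setminus\{j_i\})\;\le\;\frac{1}{1-k^F}\sum_{i=1}^{k}F(j_i)\;=\;\frac{1}{1-k^F}\sum_{j\in S}F(j),
\end{equation*}
and multiplying through by $1-k^F$ gives the claim.

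The proof is essentially a one-line telescoping argument, so I do not expect a genuine obstacle; the only point needing care is the degenerate case $k^F=1$, where the ratio defining the curvature attains $0$ and the division above is not licensed. In that case, however, the right-hand side $(1-k^F)F(S)$ equals $0$, while $\sum_{j\in S}F(j)\ge 0$ by monotonicity and non-negativity, so the inequality holds trivially; one also uses the convention — consistent with monotonicity — that an index with $F(j_i\mid V\setminus\{j_i\})=0$ forces $F(j_i)=0$, contributing $0$ to both the telescoped sum and $\sum_{j\in S}F(j)$.
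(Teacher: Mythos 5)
Your argument is correct and is essentially the paper's proof run in reverse: the paper orders $S$, applies the curvature inequality termwise ($F(j_i)\ge(1-k^F)F(j_i\mid S_{i-1})$, which implicitly combines the curvature definition with supermodularity exactly as you make explicit), and telescopes, whereas you telescope first and then divide by $1-k^F$. The only difference is that by multiplying rather than dividing the paper never needs the separate $k^F=1$ case you handle (correctly) at the end, so your write-up is, if anything, slightly more careful than the original.
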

\begin{proof}
First, we order the elements of the set $S = \{j_1, \ldots, j_n\}$, then we have
    \begin{align*}
        \sum_{j \in S} F(j) &\geq (1-k^F)\big[F(j_1) + F(j_2 \mid j_1) + \ldots + F(j_n \mid j_1, \ldots, l_{n-1} )\big] & \text{curvature inequality}\\
        &= (1-k^F)F(S) & \text{telescoping sum}
    \end{align*}
\end{proof}
\newpage

\section{Analysis Deterministic Case via Trajectory Optimization}
\label{sec:det_case_appendix}
Analogously to the analysis presented in section \ref{sec:guarantees}, it is possible to study the approximation guarantees achieved by algorithm \GTO for the trajectory-optimization version of the problem for deterministic GMDPs as presented in equation \eqref{eq:global_reinforcement_learning_traj}. 

For the sake of completeness, in the following we specialize the concept of Non-additive Suboptimality Gap (Definition \ref{def:suboptimality_gap}) in order to capture the sub-optimality gap achieved by a given trajectory in a deterministic GMDP rather than a policy in a possibly stochastic GMDP.

\begin{restatable}[Non-additive Suboptimality Gap: Trajectory-Optimization, Deterministic GMDP Version]{definition}{suboptimalityGapTraj}
\label{def:suboptimality_gap_traj} 
Consider a trajectory $\tau \in \C_\cmp$, where $\C_\cmp$ denotes the CMP constraint \ie the set of admissible trajectories, for a given Global MDP $\cmp_F$. We define the non-additive suboptimality gap $\Rmath(\tau)$ of a trajectory $\tau$ as:
\begin{equation}
    \Rmath(\tau) \coloneqq F^* - F(\tau)
\end{equation}
where $F^* \coloneqq  \max_{\tau \in \C_\cmp} \;\; F(\tau)$ and we implicitly consider an arbitrary initial starting state $s_0$. 
\end{restatable}

Given this definition we can state the following guarantees.

\begin{restatable}[Approximation Guarantee \GTO, $F$ submodular, Deterministic Case]{theorem}{guaranteeSubmodularDeterministic}
\label{theorem:guarantee_submodular_deterministic} 
By running for one iteration algorithm \GTO on a GMDP $\gmdp$ with $F$ submodular, we obtain a trajectory $\tau_1$ such that:
\begin{equation*}
    \Rmath(\tau_1) \leq k_F F^*
\end{equation*}
where $F^* \coloneqq  \max_{\tau \in \C_\cmp} \;\; F(\tau)$.
\end{restatable}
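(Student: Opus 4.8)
The plan is to prove the equivalent inequality $F(\tau_1) \ge (1-k_F)\,F^*$; the claimed bound on $\Rmath(\tau_1)$ then follows immediately, since $\Rmath(\tau_1) = F^* - F(\tau_1) \le F^* - (1-k_F)F^* = k_F F^*$ by \cref{def:suboptimality_gap_traj}. Write $\tau_0$ for the trajectory with which \GTO is initialized and $\tau_1$ for the trajectory it returns after one iteration; let $\sigma := \sigma_{\tau_0}$ be the tight permutation of \eqref{eq:tight_permutation}, and let $m_{\tau_0} = h_{\tau_0}$ be the associated modular function obtained from the extreme-point subgradient (here $F$ is monotone submodular, and we take it to be normalized, $F(\emptyset)=0$, without loss of generality, which changes neither marginal gains nor the gap). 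Fix an optimal admissible trajectory $\tau^* \in \argmax_{\tau \in \C_\cmp} F(\tau)$, so that $F(\tau^*) = F^*$.

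First I would establish the two-step chain $F(\tau_1) \ge m_{\tau_0}(\tau_1) \ge m_{\tau_0}(\tau^*)$. The first inequality holds because $m_{\tau_0}$ is a global lower bound of $F$ (\cref{prop:tight_LB}). The second holds because Line \ref{alg:GTO:solve_MDP} returns a trajectory $\tau_1$ that is optimal for the classic MDP whose local reward is $m_{\tau_0}$ over the feasible set $\C_\cmp$, and $\tau^*$ is itself feasible, i.e. $\tau^* \in \C_\cmp$; here it is essential that the surrogate objective $m_{\tau_0}$ is additive over state--time pairs, so that the inner problem is a genuine MDP solvable to optimality by \mdpSolver.

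The core step is to lower bound $m_{\tau_0}(\tau^*)$ via the submodular curvature. Since $m_{\tau_0}$ is modular, $m_{\tau_0}(\tau^*) = \sum_{x \in \tau^*} h_{\tau_0}(x)$, and by the extreme-point construction $h_{\tau_0}(x) = F(S^\sigma_{i_x}) - F(S^\sigma_{i_x - 1}) = F(x \mid S^\sigma_{i_x-1})$, where $i_x$ is the position of $x$ in $\sigma$ and $S^\sigma_{i_x-1}$ is the set of its predecessors. Because $x$ occupies exactly one position of $\sigma$, we have $S^\sigma_{i_x-1} \subseteq V \setminus \{x\}$, so submodularity gives $F(x \mid S^\sigma_{i_x-1}) \ge F(x \mid V \setminus \{x\})$, and \cref{def:submodular_curvature} gives $F(x \mid V \setminus \{x\}) \ge (1-k_F)\,F(\{x\})$. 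Summing over $x \in \tau^*$ and then using that a normalized monotone submodular $F$ is subadditive, $F(\tau^*) \le \sum_{x \in \tau^*} F(\{x\})$, I obtain $m_{\tau_0}(\tau^*) \ge (1-k_F)\sum_{x \in \tau^*} F(\{x\}) \ge (1-k_F) F(\tau^*) = (1-k_F) F^*$. Chaining this with the inequalities of the previous paragraph completes the argument.

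The step I expect to be the crux is the transfer of the marginal gain from the permutation prefix $S^\sigma_{i_x-1}$ to the worst-case marginal $F(x \mid V\setminus\{x\})$ that appears in the curvature definition: this is precisely where submodularity (diminishing returns) enters, and one must ensure the conditioning set is genuinely contained in $V\setminus\{x\}$, which holds because each element appears once in $\sigma$. The remaining subtleties are bookkeeping --- checking that $\tau^*$ is feasible for the inner MDP and that normalization is without loss --- and, as noted in the excerpt, the analogous stochastic guarantee \cref{theorem:guarantees_stochastic} reuses exactly these steps, replacing the comparison $F(\tau_1) \ge F(\tau^*)$ by an expectation over trajectories induced by $\pi_1$ and $\pi^*$.
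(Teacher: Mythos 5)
Your proposal is correct and follows essentially the same route as the paper's proof: the chain $F(\tau_1) \geq m^\sigma_{\tau_0}(\tau_1) \geq m^\sigma_{\tau_0}(\tau^*)$, expansion of the modular bound into permutation marginals, the curvature inequality $F(x \mid S^\sigma_{i_x-1}) \geq (1-k_F)F(x)$, and subadditivity to recover $F(\tau^*)$. Your only deviation is cosmetic: you split the paper's single curvature step into an explicit submodularity step ($F(x\mid S^\sigma_{i_x-1}) \geq F(x\mid V\setminus\{x\})$) followed by the curvature definition, which is exactly what the paper's step implicitly uses.
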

\begin{proof}
We first show that $F(\tau_1) \geq (1-k_F)F(\tau^*)$ where $\tau^* \in \argmax _{\tau \in \C_\cmp} \;\; F(\tau)$. Once this is proved, the theorem statement can be trivially derived by using the definition of $\Rmath(\tau)$.
\setlength{\jot}{7pt}
    \begin{align*}
        F(\tau_1) &\labelrel\geq{step:det_sub_1} m^\sigma_{\tau_0}(\tau_1)\\
        &\labelrel\geq{step:det_sub_2} m^\sigma_{\tau_0}(\tau^*)\\
        &= \sum_{x \in \tau^*} m^\sigma_{\tau_0}(I_{\tau_0}(x))\\
        &= \sum_{x \in \tau^*} F\big(S^{\sigma(\tau_0)}_{I_{\tau_0}(x)}\big) - F\big(S^{\sigma(\tau_0)}_{I_{\tau_0}(x)-1}\big)\\
        &= \sum_{x \in \tau^*} F(\{\sigma(\tau_0)[1], \ldots, \sigma(\tau_0)[I_{\tau_0}(x)]\}) - F(\{\sigma(\tau_0)[1], \ldots, \sigma(\tau_0)[I_{\tau_0}(x)-1]\})\\
        &= \sum_{x \in \tau^*} F(\sigma(\tau_0)[I_{\tau_0}(x)] \mid \{\sigma(\tau_0)[1], \ldots, \sigma(\tau_0)[I_{\tau_0}(x)-1]\})\\
        &\labelrel\geq{step:det_sub_3} (1-k_F) \sum_{x \in \tau^*} F(\sigma(\tau_0)[I_{\tau_0}(x)])\\
        &= (1-k_F)\sum_{x \in \tau^*}F(x)\\
        &\labelrel\geq{step:det_sub_4} (1-k_F)F(\tau^*)\\
    \end{align*}
\end{proof}
We define $I_{\tau}(x)$ as the function returning the index of element $x$ in trajectory $\tau$. In step \eqref{step:det_sub_1} we use the fact that $m^\sigma_{\tau_0}(\tau_1)$ is by construction a lower bound of $F(\tau_1)$, in step \eqref{step:det_sub_2} we notice that $\tau_1$ is a maximizer of $m^\sigma_{\tau_0}(\cdot)$ due to the local optimization step in \GTO, in step \eqref{step:det_sub_3} we leverage the notion of submodular curvature (Definition \ref{def:submodular_curvature}), and in step \eqref{step:det_sub_4} we exploit submodularity of $F$
 (Definition \ref{def:submodularity}).
 
\begin{restatable}[Approximation Guarantee \GTO, $F$ supermodular, Deterministic Case]{theorem}{guaranteeSupermodularDeterministic}
\label{theorem:guarantee_supermodular_deterministic} 
By running for one iteration algorithm \GTO on a GMDP $\gmdp$ with $F$ supermodular, we obtain a trajectory $\tau_1$ such that:
\begin{equation*}
    \Rmath(\tau_1) \leq \frac{2k^F-(k^F)^2}{1-k^F} F^*
\end{equation*}
where $F^* \coloneqq  \max_{\tau \in \C_\cmp} \;\; F(\tau)$.
\end{restatable}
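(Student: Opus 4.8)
The plan is to mirror the submodular case (\cref{theorem:guarantee_submodular_deterministic}) and reduce the claim to a single curvature estimate on the supermodular modular lower bound $m_{\tau_0}$ that \GTO builds around the starting trajectory $\tau_0$. Fix $\tau^\star \in \argmax_{\tau \in \C_\cmp} F(\tau)$, so $F^\star = F(\tau^\star)$. Since $\Rmath(\tau_1) = F^\star - F(\tau_1)$ and $1 - \frac{(k^F)^2 - 3k^F + 1}{1-k^F} = \frac{2k^F - (k^F)^2}{1-k^F}$, it suffices to establish the multiplicative bound $F(\tau_1) \ge \frac{(k^F)^2 - 3k^F + 1}{1-k^F} F(\tau^\star)$. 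I would start from the two-step chain $F(\tau_1) \ge m_{\tau_0}(\tau_1) \ge m_{\tau_0}(\tau^\star)$: the first inequality because $m_{\tau_0}$ lower-bounds $F$ everywhere, the second because $\tau_1$ maximizes the modular reward $m_{\tau_0}$ over the admissible set $\C_\cmp$, which contains $\tau^\star$. Unfolding the supermodular lower bound gives $m_{\tau_0}(\tau^\star) = F(\tau_0) - \sum_{j \in \tau_0 \setminus \tau^\star} F(j \mid \tau_0 \setminus j) + \sum_{j \in \tau^\star \setminus \tau_0} F(j)$, which I reorganise using $A := \tau_0 \cap \tau^\star$ together with the disjoint remainders $\tau_0 \setminus \tau^\star$ and $\tau^\star \setminus \tau_0$.

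The core of the argument is the estimate $m_{\tau_0}(\tau^\star) \ge (1-k^F) F(\tau^\star) - \frac{k^F}{1-k^F} \sum_{j \in \tau_0 \setminus \tau^\star} F(j)$, which is exactly the inequality invoked by the stochastic proof of \cref{theorem:guarantees_stochastic}. For the negative middle term, supermodularity gives $F(j \mid \tau_0 \setminus j) \le F(j \mid \V \setminus j)$, and the definition of the supermodular curvature (\cref{def:submodular_curvature}) gives $F(j \mid \V \setminus j) \le \frac{F(j)}{1-k^F}$; writing $\frac{1}{1-k^F} = 1 + \frac{k^F}{1-k^F}$ peels off an error term $\frac{k^F}{1-k^F}\sum_{j \in \tau_0 \setminus \tau^\star} F(j)$ and leaves the residual $F(\tau_0) - \sum_{j \in \tau_0 \setminus \tau^\star} F(j) + \sum_{j \in \tau^\star \setminus \tau_0} F(j)$. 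I then bound this residual below by $(1-k^F)F(\tau^\star)$ using two facts about the normalized monotone supermodular $F$: superadditivity, which gives $F(\tau_0) \ge F(A) + \sum_{j \in \tau_0 \setminus \tau^\star} F(j)$ and hence lower-bounds the residual by $F(A) + \sum_{j \in \tau^\star \setminus \tau_0} F(j)$; and a relativized version of \cref{lemma:telescoping_supermodular}, obtained by telescoping $F(\tau^\star)$ along an ordering of $\tau^\star$ that lists the elements of $A$ first and applying the curvature bound to each marginal, which yields $F(\tau^\star) \le F(A) + \frac{1}{1-k^F} \sum_{j \in \tau^\star \setminus \tau_0} F(j)$ and therefore $F(A) + \sum_{j \in \tau^\star \setminus \tau_0} F(j) \ge (1-k^F)F(\tau^\star) + k^F F(A) \ge (1-k^F)F(\tau^\star)$. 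Finally, superadditivity and monotonicity give $\sum_{j \in \tau_0 \setminus \tau^\star} F(j) \le F(\tau_0) \le F^\star = F(\tau^\star)$, so substituting into the core estimate yields $F(\tau_1) \ge \left( 1-k^F - \frac{k^F}{1-k^F} \right) F(\tau^\star) = \frac{(k^F)^2 - 3k^F + 1}{1-k^F} F(\tau^\star)$, and the theorem follows via the algebraic identity above together with the definition of $\Rmath$ (\cref{def:suboptimality_gap_traj}).

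The step I expect to be the main obstacle is the bookkeeping with curvature constants: the inequality $F(j \mid S) \le \frac{F(j)}{1-k^F}$ is only valid for $S \subseteq \V \setminus \{j\}$, so I must check that every marginal gain appearing — both in the modular lower bound and in the telescoping expansion of $F(\tau^\star)$ — is of this form, and I must choose the ordering of $\tau^\star$ so that the telescoping sum collapses onto $F(\tau^\star)$ and $F(A)$ rather than an unrelated subset. The remaining ingredients ($F(\emptyset)=0$, monotonicity, superadditivity of supermodular functions, and optimality of $\tau^\star$ over $\C_\cmp$) are routine. Note that this deterministic supermodular bound is a genuine prerequisite for the supermodular branch of \cref{theorem:guarantees_stochastic}, not a corollary of it, so proving it self-containedly here is necessary.
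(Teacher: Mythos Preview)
Your proposal is correct and follows essentially the same route as the paper: both start from $F(\tau_1)\ge m_{\tau_0}(\tau_1)\ge m_{\tau_0}(\tau^\star)$, expand the supermodular lower bound, use the curvature inequality $F(j\mid S)\le \frac{F(j)}{1-k^F}$ on the $\tau_0\setminus\tau^\star$ terms, and close with $\sum_{j\in\tau_0\setminus\tau^\star}F(j)\le F(\tau_0)\le F(\tau^\star)$. The only cosmetic difference is in the residual step: the paper breaks $F(\tau_0)$ fully into singletons and applies \cref{lemma:telescoping_supermodular} directly to $A\cup C=\tau^\star$, whereas you keep $F(A)$ intact and use a relativized telescoping bound $F(\tau^\star)\le F(A)+\frac{1}{1-k^F}\sum_{j\in C}F(j)$; both yield the same $(1-k^F)F(\tau^\star)$ lower bound on the residual, so the arguments are interchangeable.
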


\begin{proof}
We first show that $F(\tau_1) \geq \frac{(k^F)^2 - 3k^F + 1}{1-k^F}F(\tau^*)$ where $\tau^* \in \argmax _{\tau \in \C_\cmp} \;\; F(\tau)$. Once this is proved, the theorem statement can be trivially derived by using the definition of $\Rmath(\tau)$.
\setlength{\jot}{7pt}
    \begin{align*}
       F(\tau_1) &\labelrel\geq{step:det_sup_1} m_{\tau_0}(\tau_1)\\
       &\labelrel\geq{step:det_sup_2} m_{\tau_0}(\tau^*)\\
       &= F(\tau_0) - \sum_{j \in \tau_0 \backslash \tau^*} F(j \mid \tau_0 \backslash j) + \sum_{j \in \tau^* \backslash \tau_0} F(j)\\
       &= F(\tau_0) - \sum_{j \in B} F(j \mid \tau_0 \backslash j) + \sum_{j \in C} F(j)\\
       &\geq \sum_{j \in A \cup B} F(j) - \sum_{j \in B} F(j \mid \tau_0 \backslash j) + \sum_{j \in C} F(j)\\
       &= \sum_{j \in A \cup C} F(j) - \sum_{j \in B} F(j \mid \tau_0 \backslash j) + \sum_{j \in B} F(j)\\
       &\labelrel\geq{step:det_sup_3} \sum_{j \in A \cup C} F(j) - \frac{1}{1-k^F}\sum_{j \in B} F(j) + \sum_{j \in B} F(j)\\
       &\geq \sum_{j \in A \cup C} F(j) - \frac{k^F}{1-k^F}\sum_{j \in B} F(j)\\
       &\labelrel\geq{step:det_sup_4} (1-k^F)F(\tau^*) - \frac{k^F}{1-k^F}\sum_{j \in B} F(j)\\
       &\labelrel\geq{step:det_sup_5}(1-k^F)F(\tau^*) - \frac{k^F}{1-k^F}F(\tau^*)\\
       &= \frac{(k^F)^2 - 3k^F + 1}{1-k^F}F(\tau^*)
    \end{align*}
    where where we have defined $A := \tau_0 \cap \tau^*, B := \tau_0 \backslash \tau^*, C:= \tau^* \backslash \tau_0$. In step \eqref{step:det_sup_1} we use the fact that $m^\sigma_{\tau_0}(\tau_1)$ is by construction a lower bound of $F(\tau_1)$, in step \eqref{step:det_sup_2} we notice that $\tau_1$ is a maximizer of $m_{\tau_0}(\cdot)$ due to the local optimization step in \GTO, in step \eqref{step:det_sup_3} we leverage the notion of supermodular curvature (Definition \ref{def:submodular_curvature}), and in step \eqref{step:det_sup_4} we use Lemma \ref{lemma:telescoping_supermodular}. Meanwhile, step \eqref{step:det_sup_5} is due to the following chain of inequalities:
    \begin{align*}
        \sum_{j \in B}F(j) &\leq F(B) & \tag{supermodularity}\\
        &\leq F(\tau_0) & \tag{monotonicity}\\
        &\leq F(\tau^*) & \tag{optimality of $\tau^*$}\\
    \end{align*}
\end{proof}

\begin{restatable}[Approximation Guarantee \GTO, $F$ BP, Deterministic Case]{theorem}{guaranteeBPDeterministic}
\label{theorem:guarantee_BP_deterministic}
By running for one iteration algorithm \GTO on a GMDP $\gmdp$ with $F = Q + G$ BP, we obtain a trajectory $\tau_1$ such that:
\begin{equation*}
    \Rmath(\tau_1) \leq \alpha F^*
\end{equation*}
with 
\[
    \alpha  =
    \begin{cases}
    \frac{2k^G-(k^G)^2}{1-k^G}  & \text{if $k_F\leq k^G$} \\
    \frac{1-(1-k_Q)(1-k^G)}{1-k^G} & \text{otherwise} 
    \end{cases}
\]
where $k_Q$ is the submodular curvature of $Q$,  $k^G$ is the supermodular curvature of $G$, and $F^* \coloneqq  \max_{\tau \in \C_\cmp} \;\; F(\tau)$. 
\end{restatable}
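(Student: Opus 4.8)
The plan is to mirror the two single-component arguments (\cref{theorem:guarantee_submodular_deterministic,theorem:guarantee_supermodular_deterministic}) while exploiting that one iteration of \GTO on $\gmdp$ with $F = Q+G$ linearizes \emph{both} components at once: around the current trajectory $\tau_0$ it forms the tight modular lower bound $m^\sigma_{\tau_0} + m_{\tau_0}$, where $m^\sigma_{\tau_0}$ is the submodular surrogate of $Q$ (built from the permutation $\sigma_{\tau_0}$) and $m_{\tau_0}$ is the supermodular surrogate of $G$, and \mdpSolver returns $\tau_1 \in \argmax_{\tau \in \C_\cmp}\big(m^\sigma_{\tau_0}(\tau) + m_{\tau_0}(\tau)\big)$. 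Letting $\tau^* \in \argmax_{\tau \in \C_\cmp} F(\tau)$, the backbone is
\begin{equation*}
F(\tau_1) = Q(\tau_1) + G(\tau_1) \ge m^\sigma_{\tau_0}(\tau_1) + m_{\tau_0}(\tau_1) \ge m^\sigma_{\tau_0}(\tau^*) + m_{\tau_0}(\tau^*),
\end{equation*}
where the first inequality is lower-boundedness of the two surrogates and the second is optimality of $\tau_1$ for the \emph{summed} surrogate. This is exactly where the BP case departs structurally from the pure ones: one can no longer peel off $m^\sigma_{\tau_0}(\tau_1) \ge m^\sigma_{\tau_0}(\tau^*)$ term by term, only the sum is preserved.

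Next I would bound the two terms at $\tau^*$ separately, recycling the existing computations. For the submodular part, the steps of \cref{theorem:guarantee_submodular_deterministic} applied to $Q$ (rewrite $m^\sigma_{\tau_0}(\tau^*)$ as a sum of marginal gains of $Q$ along $\sigma_{\tau_0}$, apply the submodular-curvature bound of \cref{def:submodular_curvature}, then subadditivity of the normalized monotone submodular $Q$) give $m^\sigma_{\tau_0}(\tau^*) \ge (1-k_Q)\,Q(\tau^*)$. For the supermodular part, the steps of \cref{theorem:guarantee_supermodular_deterministic} together with the telescoping bound of \cref{lemma:telescoping_supermodular} give $m_{\tau_0}(\tau^*) \ge (1-k^G)\,G(\tau^*) - \frac{k^G}{1-k^G}\sum_{j \in \tau_0 \backslash \tau^*} G(j)$. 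The residual sum is then controlled by the chain $\sum_{j \in \tau_0 \backslash \tau^*} G(j) \le G(\tau_0) \le Q(\tau_0) + G(\tau_0) = F(\tau_0) \le F(\tau^*)$, using supermodularity (superadditivity on $\tau_0 \backslash \tau^* \subseteq \tau_0$), monotonicity, non-negativity of $Q$, and optimality of $\tau^*$. This last step is the one genuinely new ingredient compared with the pure supermodular proof, where $\tau^*$ maximized $G$ itself; here it maximizes $F$, and non-negativity of $Q$ is what makes the estimate survive.

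Combining the three estimates yields $F(\tau_1) \ge (1-k_Q)Q(\tau^*) + (1-k^G)G(\tau^*) - \frac{k^G}{1-k^G}\big(Q(\tau^*)+G(\tau^*)\big)$, which I would regroup as $\beta_Q\,Q(\tau^*) + \beta_G\,G(\tau^*)$ with $\beta_Q = \frac{(1-k_Q)(1-k^G)-k^G}{1-k^G}$ and $\beta_G = \frac{(1-k^G)^2-k^G}{1-k^G}$. Since $Q$ and $G$ are normalized and monotone, $Q(\tau^*), G(\tau^*) \ge 0$ and $Q(\tau^*)+G(\tau^*) = F(\tau^*) = F^*$, so $F(\tau_1) \ge \min(\beta_Q,\beta_G)\,F^*$; comparing the two numerators over the common positive denominator gives $\beta_Q \le \beta_G \iff k_Q \ge k^G$, and in the two resulting cases $1 - \min(\beta_Q,\beta_G)$ equals exactly the claimed $\alpha$. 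Finally $\Rmath(\tau_1) = F^* - F(\tau_1) \le \alpha F^*$ by \cref{def:suboptimality_gap_traj}. I expect the only real difficulty to be bookkeeping: keeping the two surrogate evaluations aligned so the linear combination collapses cleanly, and checking the residual-term chain $\sum_{j \in \tau_0 \backslash \tau^*} G(j) \le F^*$ goes through even though $\tau^*$ optimizes $F$ rather than $G$.
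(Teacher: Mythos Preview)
Your proposal is correct and follows essentially the same structure as the paper's proof: lower bound $F(\tau_1)$ by the summed surrogate at $\tau^*$, reuse the submodular and supermodular single-component computations to obtain $(1-k_Q)Q(\tau^*) + (1-k^G)G(\tau^*) - \frac{k^G}{1-k^G}F(\tau^*)$, and then take $\min(\beta_Q,\beta_G)$. The only minor deviation is in bounding the residual $\sum_{j \in \tau_0\setminus\tau^*} G(j) \le F(\tau^*)$: the paper routes through $G(\tau_G^*)$ with $\tau_G^*$ a maximizer of $G$, whereas you go directly via $G(\tau_0)\le F(\tau_0)\le F(\tau^*)$, which is slightly cleaner but otherwise equivalent.
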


\begin{proof}
    We define $I_{\tau}(x)$ as the function returning the index of element $x$ in trajectory $\tau$. We first show that $F(\tau_1) \geq (1-\alpha)F(\tau^*)$ where $\tau^* \in \argmax _{\tau \in \C_\cmp} \;\; F(\tau)$. Once this is proved, the theorem statement can be trivially derived by using the definition of $\Rmath(\tau)$.
    \begin{align*}
        F(\tau_1) &= Q(\tau_1) + G(\tau_1)\\
        &\labelrel\geq{step:det_bp_1} m_{\tau_0}^\sigma(\tau_1) + m_{\tau_0}(\tau_1)\\
        &\labelrel\geq{step:det_bp_2} m_{\tau_0}^\sigma(\tau^*) + m_{\tau_0}(\tau^*)\\
        &\labelrel\geq{step:det_bp_3} (1-k_Q)Q(\tau^*)+ m_{\tau_0}(\tau^*)\\
    \end{align*}
    In step \eqref{step:det_bp_1} we use the fact that $m^\sigma_{\tau_0}(\tau_1)$ and $ m_{\tau_0}(\tau_1)$ are by construction respectively lower bounds of $Q(\tau_1)$ and $G(\tau_1)$. In step \eqref{step:det_bp_2}, we notice that $\tau_1$ is a maximizer of the sum of marginal lower bounds, namely $m_{\tau_0}^\sigma(\cdot) + m_{\tau_0}(\cdot)$ due to the local optimization step in \GTO. Meanwhile step \eqref{step:det_bp_3} can be trivially derived by following steps \eqref{step:det_sub_1} to \eqref{step:det_sub_4} used to prove theorem \ref{theorem:guarantee_submodular_deterministic}.
    Now, we define $A := \tau_0 \cap \tau^*, B := \tau_0 \backslash \tau^*, C:= \tau^* \backslash \tau_0$ and lower bound the second term as follows:
    \begin{align}
        m_{\tau_0}(\tau^*) &\labelrel\geq{step:det_bp_4} (1-k^G)G(\tau^*) - \frac{k^G}{1-k^G}\sum_{j \in B}G(j) \nonumber\\
        &\labelrel\geq{step:det_bp_5} (1-k^G)G(\tau^*) - \frac{k^G}{1-k^G}G(\tau_G^*) \nonumber\\
        &\labelrel\geq{step:det_bp_6} (1-k^G)G(\tau^*) - \frac{k^G}{1-k^G}F(\tau_G^*) \label{eq: lower_bound_supermodular_BP}
    \end{align}
    where in step \eqref{step:det_bp_4} we have trivially followed steps \eqref{step:det_sup_2} to \eqref{step:det_sup_4} used to prove theorem \ref{theorem:guarantee_supermodular_deterministic}, while in step \eqref{step:det_bp_5} we have used the following chain of inequalities: $$\sum_{j \in B}G(j) \leq G(B) \leq G(\tau_0) \leq G(\tau_G^*)$$ and in step \eqref{step:det_bp_6} we have used the fact that:
    \begin{equation*}
        F(\tau^*) \geq F(\tau_G^*) = Q(\tau_G^*) + G(\tau_G^*) \geq G(\tau_G^*)
    \end{equation*}
    since $Q$ is non-negative. By plugging equation \ref{eq: lower_bound_supermodular_BP} into the initial chain of inequalities, we obtain:
    \begin{align*}
          F(\tau_1) &\geq (1-k_Q)Q(\tau^*) + (1-k^G)G(\tau^*) - \frac{k^G}{1-k^G}F(\tau^*)\\
          &= (1-k_Q)Q(\tau^*) + (1-k^G)G(\tau^*) - \frac{k^G}{1-k^G}\big[Q(\tau^*)+ G(\tau^*)\big]\\
          &= \frac{(1-k_Q)(1-k^G)-k^G}{1-k^G}Q(\tau^*) + \frac{(1-k^G)^2-k^G}{1-k^G}G(\tau^*)
    \end{align*}
    from which we can straightforwardly derive the statement by lower bounding the coefficients depending on the values of $k_Q, k^G$ and using the definition of $F$. 
\end{proof}
\newpage

\section{Computational Hardness}
\label{sec:proof_hardness}
\hardness*

\begin{proof}
   The proof is based on a reduction from a problem with a known hardness result, namely \citep[Theorem 4.1]{bai2018greed}, which gives the same approximation ratio as in the lemma, but for the cardinality constrained case. 
   We refer with $P1$ to the problem of BP maximization with a  cardinality constraint, while with $P2$ to the problem of BP maximization under a CMP constraint.
   The reduction works as follows. First, we define a poly-time reduction from any instance of $P1$ to a specific instance of $P2$. In particular, given an instance of $P1$ with a function $F$, a ground set $\V$ and a cardinality constraint $k$, we define an instance of $P2$ with CMP constraint $\mathcal{C}_{\mathcal{M}}$ induced by the fully-connected CMP  $\mathcal{M} := \langle \V, \Aspace, P, H=k\rangle$ as explained in definition \ref{def: cmp_constraint}. The time-extended CMP $\mathcal{M}_H$ will be a $k$-layered CMP of which the state space is a $k$-fold cartesian product of the original ground set $\V$. We define the objective function of $P2$ as $F': D \coloneqq \V \times [k] \to \R$ and $F(S_d) = F(\Pi S_d)$ where $S_d \subseteq D$ and $\Pi: \V \times [k] \to \V$ is a projector map that drops the time-coordinate of its input, e.g. $\Pi (\{(s,t), (s',t')\}) = \{s,s'\}$. For the sake of notational simplicity we write $\Pi S$ instead of $\Pi(S).$
   Notice that the instance of $P2$ can be computed in poly-time \wrt the cardinality of the original ground set $\V$, which represents the complexity of the initial instance of $P1$. In order to show that the instance we have built for $P2$ is a valid one, it is left to show that $F' \in BP$. We start by noticing that
   \begin{equation*}
       F'(S_d) = F(\Pi S_d) = Q(\Pi S_d) + G(\Pi S_d) = Q'(S_d) + G'(S_d)
   \end{equation*}
   where we have used the fact that $F \in BP$ and have defined $Q' := Q\Pi$ and $G' := G\Pi$. Since $Q$ and $G$ are non-negative then also $Q'$ and $G'$ must be non-negative, and since $Q, G, \Pi$ are monotone then also $Q'$ and $G'$ are monotone. Moreover, once can easily check that $Q'$ is submodular.
   Next, we show that $G'$ is supermodular and that it preserves the supermodular curvature of $G$, \ie $k^{G'} = k^{G}$.
   Consider the sets $A_d \subseteq B_d \subseteq D$ and the element $d \notin B_d$. In order to prove that $G'$ is supermodular we must show that $G'(d \mid A_d) \leq G'(d \mid B_d)$. We define $S := \Pi S_d$ and write:
   \begin{align*}
       G'(d \mid B_d) &= G'(B_d \cup d)- G'(B_d)\\
       &= G(\Pi(B_d \cup d)) - G(\Pi B_d)\\
       &= G(\Pi B_d \cup \Pi d) - G(\Pi B_d)\\
       &= G(\Pi d \mid \Pi B_d)\\
       &\geq G(\Pi d \mid \Pi A_d) \tag{$A_d \subseteq B_d \implies \Pi A_d \subseteq \Pi B_d$}\\
       &= G(\Pi A_d \cup \Pi d) - G(\Pi A_d)\\
       &= G(\Pi(A_d \cup d)) - G(\Pi A_d)\\
       &= G'(A_d \cup d) - G'(A_d)\\
       &= G'(d \mid A_d)
   \end{align*}
   which proves that $G'$ is supermodular. As for its curvature, we have:
   \begin{align*}
       k^{G'} &= 1 - \min_{S_d \subseteq D, d \notin S_d} \frac{G'(d)}{G'(d \mid S_d)}\\
       &= 1- \min_{S_d \subseteq D, d \notin S_d} \frac{G(\Pi S_d)}{G(\Pi S_d \cup \Pi d) - G(\Pi S_d)}\\
       &= 1 - \min_{S \subseteq \V, v \notin S} \frac{G(v)}{G(v \mid S)}\\
       &= k^G
   \end{align*}
    For the sake of contradiction, we now suppose that there exists a poly-time algorithm that can solve $P2$ by computing a set $\hat{S}$ such that for every function $F' \in BP$ and $\epsilon > 0$ we have:
    \begin{equation}
        F'(\hat{S}_d) > (1-k^{G'} + \epsilon)F'(S_d^*) \label{eq: ineq_hardness}
    \end{equation}
    where $S_d^*$ is an optimizer of $F'$.
    We claim that eq. \ref{eq: ineq_hardness} implies that $F(\hat{S}_d) > (1-k^{G} + \epsilon)F(S^*)$, where $S^*$ is an optimizer of $F$. Which would be a contradiction with the aforementioned hardness result and would imply the result stated in the lemma.
    In order to prove that
    \begin{equation*}
        F'(\hat{S}_d) > (1-k^{G'} + \epsilon)F'(S_d^*) \implies F(\hat{S}_d) > (1-k^{G} + \epsilon)F(S^*)
    \end{equation*}
    we notice that $F'(S_d) = F(\Pi S_d)$ by definition, and therefore it is left to prove that $F'(S_d^*) = F(S^*)$. By def. of $F'$ we have that $F'(S_d^*) = F(\Pi S_d^*)$, hence it suffices to show that $\Pi S_d^* = S^*$. By contradiction, we suppose that $S^* \neq \Pi S_d^*$. By def. of $S^*$ this would imply that $F(S^*) > F(\Pi S_d^*)$. But notice that $\forall S \subset \V$, the pre-image of $S$ along $\Pi$ is always a non-empty subset of $D$, namely $\Pi^{-1}(S)$ with $\Pi^{-1}: 2^V \to 2^D$. Therefore we can pick $\Bar{S}_d \in \Pi^{-1}(S)$ and we would obtain that:
    \begin{equation*}
        F'(\Bar{S}_d) = F(\Pi \Bar{S}_d) = F(S^*)> F(\Pi S_d^*) = F'(S_d^*)
    \end{equation*}
    which is a contradiction since $S_d^*$ is a maximizer of $F'$ by definition. This fact, together with the fact that $k^{G'} = k^G$ proves our claim. Ultimately, notice that once an optimal solution for $P2$ is computed, an optimal solution for $P1$ can be computed in poly-time. 
\end{proof}
\newpage

\section{Algorithm}
\label{sec:appx_algorithm}
In this section, we first present two propositions, especially for the algorithm \GTO that ensure the build modular functions are tight lower bound of the global reward function, and guarantee monotonic improvement. Then we present the \GPO algorithm and finally discuss some efficient ways to build modular lower bounds.

\begin{proposition}[Tight modular lower bound] \looseness -1 Let $F:2^V \to \R$ be a submodular function. For any set $X \subseteq V$, define permutation $\sigma: [|V|] \to V$ such that $S^\sigma_{|X|}=X$, where $S^\sigma_i = \{\sigma(1), \sigma(2), \hdots, \sigma(i)\}$ and $S^\sigma_0 = \emptyset$. Define a modular function about the set $X$, $m^\sigma_X = \sum_{v \in X} m^{\sigma}_X(v)$ with entries for element $i\in [|V|]$ given by $m^{\sigma}_X(\sigma(i))\coloneqq F( S^\sigma_i) -  F(S^\sigma_{i-1})$. Then $m^\sigma_X$ is a tight modular lower bound of the submodular function F, i.e.,  $m_X(X) = F(X)$ with $m_X(Y) \leq F(Y), \forall Y \subseteq V$.
\label{prop:tight_LB}
\end{proposition}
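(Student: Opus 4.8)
The plan is to adapt Edmonds' classical greedy argument for the extreme points of the submodular base polytope. Writing $m^\sigma_X(Y) = \sum_{v \in Y} m^\sigma_X(v)$ for the induced modular set function, there are exactly two claims to establish: the tightness $m^\sigma_X(X) = F(X)$, and the lower bound $m^\sigma_X(Y) \le F(Y)$ for every $Y \subseteq V$. Throughout I use the paper's normalization convention $F(\emptyset) = 0$ (so that, as noted after the definition of $m^\sigma_X$, the general lower bound $F(X) + h_X(Y) - h_X(X)$ collapses to $h_X(Y)$).

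First I would dispatch tightness by telescoping. Since $S^\sigma_{|X|} = X$ by hypothesis, the first $|X|$ elements of $\sigma$ are precisely the elements of $X$, so
\[
m^\sigma_X(X) = \sum_{i=1}^{|X|} m^\sigma_X(\sigma(i)) = \sum_{i=1}^{|X|} \bigl( F(S^\sigma_i) - F(S^\sigma_{i-1}) \bigr) = F(S^\sigma_{|X|}) - F(S^\sigma_0) = F(X) - F(\emptyset) = F(X).
\]

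For the lower bound, I would fix $Y \subseteq V$ and relabel its elements according to the order in which $\sigma$ lists them: write $Y = \{\sigma(i_1), \dots, \sigma(i_k)\}$ with $i_1 < \dots < i_k$. Then, by definition of the entries of $m^\sigma_X$,
\[
m^\sigma_X(Y) = \sum_{j=1}^{k} \bigl( F(S^\sigma_{i_j}) - F(S^\sigma_{i_j - 1}) \bigr) = \sum_{j=1}^{k} F\bigl( \sigma(i_j) \mid S^\sigma_{i_j - 1} \bigr).
\]
The crux is the inclusion $\{\sigma(i_1), \dots, \sigma(i_{j-1})\} \subseteq S^\sigma_{i_j - 1}$, which holds because $i_{j-1} \le i_j - 1$ forces $\{\sigma(1),\dots,\sigma(i_{j-1})\} \subseteq \{\sigma(1),\dots,\sigma(i_j-1)\} = S^\sigma_{i_j-1}$. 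Submodularity of $F$ (Definition \ref{def:submodularity}, i.e. diminishing marginal gains) then gives $F(\sigma(i_j) \mid S^\sigma_{i_j-1}) \le F(\sigma(i_j) \mid \{\sigma(i_1),\dots,\sigma(i_{j-1})\})$; summing over $j$ and telescoping the right-hand side yields $m^\sigma_X(Y) \le F(Y) - F(\emptyset) = F(Y)$.

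The proof is essentially bookkeeping, and the only delicate point is the relabeling step: one must order the elements of $Y$ consistently with $\sigma$ so that each new element's ``history'' inside $Y$ is contained in the corresponding prefix of the whole ground set, which is precisely what licenses the diminishing-returns inequality. I would also note that both arguments use only that the first $|X|$ slots of $\sigma$ contain $X$, and are otherwise insensitive to the internal ordering within $X$ and within $V \setminus X$ — the freedom exploited in \eqref{eq:tight_permutation}.
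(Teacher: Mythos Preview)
Your proof is correct and follows essentially the same approach as the paper: telescoping for tightness, and ordering the elements of $Y$ according to $\sigma$ so that submodularity (diminishing marginals) yields the lower bound. Your handling of the indices is in fact a bit more careful than the paper's own write-up, which uses the same idea but with slightly looser notation for the prefix sets.
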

\begin{proof} As per definition, $m^{\sigma}_X(X) =  \sum_{v \in X} m^{\sigma}_X(v) = \sum_{i\in[|X|]} F( S^\sigma_i) -  F(S^\sigma_{i-1}) = \sum_{i\in[|X|]} F( S^\sigma_i | S^\sigma_{i-1}) = F(X)$. The last equality follows since $S^\sigma_{|X|}=X$. Hence the modular function is tight at $X$. Next, we prove that it is lower bound, i.e., $\forall Y: m^{\sigma}_X(Y) \leq F(Y)$.

Let $Y = \{ i_1, \hdots i_k\}$, wlog, s.t, $i_j <^{\sigma} i_{j+1}$, i.e., the elements are arranged in $Y$ as per permutation $\sigma$.
\begin{align*}
    F(Y) &= \sum_{j=1}^{k} F(i_j | i_1, \hdots i_{j-1}) \tag{$\{i_1, \hdots i_{j-1} \} \subseteq \{ \sigma(1), \hdots \sigma(j-1)\}$}\\
       &\geq \sum_{j=1}^{k} F(i_j | \sigma(1), \hdots \sigma(j-1)) = m^{\sigma} (Y)
\end{align*}
\end{proof}

\begin{proposition}[Monotonic Improvement]\label{lem:monotonic_improvement}
    \GTO monotonically improves the objective function, i.e., at any iteration $t$, it holds that $F(\tau_{t+1}) \geq F(\tau_{t})$.
\end{proposition}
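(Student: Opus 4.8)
The plan is to chain three inequalities exploiting, in order, (i) that $m_{\tau_t}$ is a lower bound of $F$, (ii) that $\tau_{t+1}$ is optimal for the surrogate MDP, and (iii) that $m_{\tau_t}$ is tight at $\tau_t$. Concretely, I would establish
\[
F(\tau_{t+1}) \;\ge\; m_{\tau_t}(\tau_{t+1}) \;\ge\; m_{\tau_t}(\tau_t) \;=\; F(\tau_t),
\]
and then the claim $F(\tau_{t+1}) \ge F(\tau_t)$ is immediate.

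First I would invoke \cref{prop:tight_LB}, together with the supermodular lower-bound construction recalled in \cref{sec:method} and the fact that the sum of tight modular lower bounds of $Q$ and $G$ is a tight modular lower bound of the \BP reward $F=Q+G$: this gives that $m_{\tau_t}$ is a modular function on $\V=\Sspace\times\T$ with $m_{\tau_t}(Y)\le F(Y)$ for all $Y\subseteq\V$. Applied to $Y=\tau_{t+1}$ this yields the first inequality. Since $m_{\tau_t}$ is modular, it is literally the additive (state--time) reward of the classic MDP $\langle\Sspace,\Aspace,P,s_0,H,m_{\tau_t}\rangle$ built in Line~\ref{alg:GTO:linearize_MDP}; hence the trajectory $\tau_{t+1}$ returned by \mdpSolver\ in Line~\ref{alg:GTO:solve_MDP} is, by correctness of the solver on a deterministic GMDP, a maximizer of $m_{\tau_t}$ over the set $\C_\cmp$ of admissible trajectories (\cref{def: cmp_constraint}).

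The one point requiring care — and the only real obstacle — is certifying that $\tau_t$ is itself a feasible competitor in that maximization, i.e.\ $\tau_t\in\C_\cmp$, so that optimality of $\tau_{t+1}$ gives $m_{\tau_t}(\tau_{t+1})\ge m_{\tau_t}(\tau_t)$. I would handle this by induction on $t$: $\tau_1$ is chosen as an admissible trajectory, and for $t\ge 2$ the trajectory $\tau_t$ was produced by \mdpSolver\ at the previous iteration and hence corresponds to an admissible path in the time-extended CMP. With $\tau_t\in\C_\cmp$ established, the second inequality follows, and the tightness clause $m_{\tau_t}(\tau_t)=F(\tau_t)$ from \cref{prop:tight_LB} closes the chain. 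I would also remark that every trajectory in $\C_\cmp$ visits exactly one state--time pair per layer, so all have cardinality $H$ and comparing modular values $m_{\tau_t}(\cdot)$ across them is exactly comparing additive returns, with no cardinality artifact; given this, the three-line argument above is complete.
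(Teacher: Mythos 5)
Your proposal is correct and is essentially the paper's own argument: the same three-step chain (tightness of $m_{\tau_t}$ at $\tau_t$, optimality of $\tau_{t+1}$ returned by \mdpSolver for the modular rewards, and the lower-bound property $m_{\tau_t}\le F$), merely written in the reverse order and with the feasibility of $\tau_t$ in $\C_\cmp$ spelled out explicitly, which the paper leaves implicit.
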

\begin{proof} Let $m_{\tau_{t}}$ be a modular lower bound of a global reward function, F about the trajectory $\tau_t$. Then,
\begin{align*}
    F(\tau_t) &= \sum_{v\in \tau_t} m_{\tau_{t}}(v) \\
    &\labelrel\leq{step:mono_improv} \sum_{v\in \tau_{t+1}} m_{\tau_{t}}(v) \\
    &\labelrel\leq{step:lb} F(\tau_{t+1})
\end{align*}
In the above proof, \eqref{step:mono_improv} follows since $\tau_{t+1}$ is the optimal policy for modular rewards $m_{\tau_t}$ obtained by \mdpSolver, e.g., value iteration. Step \eqref{step:lb} follows since $m_{\tau_{t}}$ is a lower bound of the global reward function $F$.   
\end{proof}

\subsection{Policy optimization for stochastic GMDPs}

In this section, we extend the approach defined for deterministic GMDPs to stochastic  GMDPs. The core idea stays the same, i.e., we recursively approximate the stochastic GMDPs with stochastic linearized MDP and solve it with standard MDP tools. However, we need a mechanism to convert the stochastic GMDPs to linearized  MDPs. What should we linearise it about?

We solve the problem in policy space and linearize the GMDP around the current policy. However, we can compute the lower bounds only around the sets. Policy can thus interpreted as distribution over the trajectories and we define the modular rewards around a policy $\pi$ as:
\begin{align}
    m^E_{\pi} \coloneqq \EV_{\tau \sim \pi} [m_{\tau}] \label{eq: apx_stochastic_lb}
\end{align}
where $m^E_{\pi} \in \R^\V$ defines a modular reward around a policy $\pi$ and $m_{\tau} \in \R^\V$ are reward computed around a trajectory $\tau$. Furthermore we define the linearization of the \GRL objective $J(\pi)$ around the policy $\pi'$, by evaluating this \cref{eq: apx_stochastic_lb} for a policy $\pi$ as,
\begin{align*}
    m^E_{\pi'}(\pi) \coloneqq \EV_{\tau \sim \pi} \EV_{\tau' \sim \pi'} [m_{\tau'}(\tau)]
\end{align*}
\algdef{SE}[DOWHILE]{Do}{doWhile}{\algorithmicdo}[1]{\algorithmicwhile\ #1}%

\begin{algorithm}[!t]
\caption{Global Policy Optimization (\GPO)}
\begin{algorithmic}[1]
\label{alg:global_policy_optim}
\State \textbf{Initialize} GMDP, $\pi_1 \xleftarrow{} random$, $t \leftarrow 0$, 
\Do 
\State $t \leftarrow t+1$
\State Estimate $m^{E}_{\pi_{t}} \coloneqq \EV_{\tau \sim \pi_{t}}[m_{\tau}]$ \label{alg:GPO:linearize_MDP}
\State  $\pi_{t+1} \xleftarrow{} \mdpSolver(\mdp = \langle \Sspace, \Aspace, P, s_0, m^E_{\pi_t}, H \rangle)$ \label{alg:GPO:MDPsolver}
\doWhile{$J(\pi_{t+1})>J(\pi_{t})$} \label{alg:GPO:compare_objective}
\State Return $\pi_t$  \label{alg:GPO:return}
\end{algorithmic}
\label{alg:GPO_algorithm}
\end{algorithm}

\mypar{Global Policy Optimization (\GPO)} The outline of the steps is given in \cref{alg:GPO_algorithm}. The algorithm starts with an arbitrary policy $\pi_1$, which may be represented using e.g., a value function. 
In each iteration, we estimate a modular lower bound of the global reward function about the current policy $\pi$ (\cref{alg:GPO:linearize_MDP}) using Monte Carlo samples. In particular, we sample trajectories utilizing the current policy $\pi_t$ in the \GMDP and compute modular lower bound w.r.t. each of the sampled trajectories. On averaging across all the modular lower bounds, we define a reward for each state-time pair which forms a classic \MDP with modular rewards. Given the modular rewards, we can deploy any \mdpSolver, e.g., value iteration, policy iteration or linear program, etc that solves $\argmax_{\pi} m^{E}_{\pi_{t}}(\pi)$ and results in the optimal policy for the linearized \MDP $\mdp$ in \cref{alg:GPO:MDPsolver}. The algorithm continues and we linearize the \GMDP about the improved stochastic policy and resolve it for a better policy. In every iteration, we compare the objective with the last policy empirically using samples. In case the objective doesn't improve anymore, we terminate in \cref{alg:GPO:return} with the best policy $\pi_t$.

\subsection{Alternative modular lower bounds for submodular rewards}
\label{sec:alternative_LB}

\mypar{Computational complexity of computing lower bounds (LB's)} The computation of LB is $\mathcal{O}(|\mathcal{S} \times\mathcal{T}|)$. Notably, solving a finite horizon MDP \citep[c.f. Chapter 1.2]{agarwal2019reinforcement}, for example, using value iteration has per-iteration complexity of $\mathcal{O}(|\mathcal{S} \times \mathcal{T}|^2 |\mathcal{A}|)$, and thus computing the lower bounds is a non-dominant operation that does not affect the computational complexity of the overall algorithm. The challenge of scaling to a large state space and horizon exists in finite horizon MDP solvers as well and is not exclusive to our approach. However, in order to make the computation of LB more efficient in practice, one can incorporate approaches tailored to the problem such as GPO-S, where the lower bound remains the same for a fixed state across multiple time steps and is empirically faster to compute. We next elaborate more on this.

\mypar{Only state dependent lower bounds} In many applications, submodular rewards are naturally defined on the state space $\mathcal{S}$ rather than joint state time space $\V$. For instance, consider a submodular function $F^\prime: 2^{\mathcal{S} \to \R}, F^\prime(S) \coloneqq |\bigcup_{s\in S} D^s|$. We can build a submodular function $F:2^{\mathcal{S}\times \T} \to \R$ using an operator $A: 2^{\Sspace \times \T} \to 2^{\Sspace}$ that drops the time indices and define $F(\tau) \coloneqq F^\prime(A(\tau))$ \citep[Section 2]{prajapat2023submodular}.

For such functions, we can build a modular lower bound about current 
trajectory $\tau=\{(s_i,i)\}_{i=0}^{H-1}$ with a permutation $\sigma = \{ \tau, \mathcal{V} \backslash \tau\}$ as:

\begin{align}
    m^\sigma(s,t) \coloneqq \begin{cases}
 F(S^{\sigma}_i) - F(S^{\sigma}_{i-1})  & \quad (s,t) \in \tau\\
 \left(F\left(S^{\sigma}_i\right) - F\left(S^{\sigma}_{i-1}\right) \right)/H & \quad  (s,t) \in \tau_r\\
0 &\quad (s,t) \in \tau_v
    \end{cases} \label{eqn:modular_lb}
\end{align}

where $S_i^\sigma = \{ \sigma(1), \sigma(2), . . . , \sigma(i)\}$, $(s,t)=\sigma(i)$, $\tau_r = \{ (s,t) \in \mathcal{V} | (s,\cdot) \not\in \tau\}$ is the set of state-time pair where the state is not yet visited by $\tau$ and $\tau_v = \{ (s,t) \in \mathcal{V} | (s,\cdot) \in \tau, (s,t) \not\in \tau \}$ is the set of the state-time pair where the state is visited but at a different time.

Note that these bounds are computationally more efficient to compute, especially in the case of large horizons. Essentially we compute marginal gain for each state not yet visited and assign this as a reward to that state for any future visit. 

\looseness -1 \mypar{Greedy $\sigma$-permutation for lower bounds} In general, we can pick any permutation $\sigma_\tau = \{ \tau, \V\backslash \tau\}$ randomly as long as first $H$ elements are $\tau$. This results in a valid modular lower bound and hence our theoretical results hold. However empirical performance may vary based on the permutation (c.f. \cref{fig:cover_1_traj}). Here we present a strategy to build these lower bounds greedily. For simplicity, we present it for the state-dependent case explained above.

Define a permutation $\sigma = \{\tau, \tau_r, \tau_v \}$, where $\tau_r \coloneqq \{ (s,t) \in \V | (s,\cdot) \not\in \tau\}$ is the set of state-time pair where the state is not visited earlier and $\tau_v \coloneqq \{ (s,t) \in \V | (s,\cdot) \in \tau, (s,t) \not\in \tau \} $ is the set of state-time pair where the state is visited but at a different time. Ignoring the time dimension, we order the states in the set $\tau_r$ as follows,   
\begin{align*}
   \sigma \coloneqq  \argmax_{s} F'(\{s\} \cup S_i) - F'(S_i),
\end{align*}
where $S_i^\sigma = \{ \sigma(1), \sigma(2), \hdots, \sigma(i)\}$. In this permutation, we randomly shuffle the last $\tau_v$ states, which were visited but at different times. Using this permutation, we define the modular rewards, $m^\sigma(s,t)$, as given in \cref{eqn:modular_lb}.

Note that in \cref{eqn:modular_lb}, we divide modular reward by $H$ for $(s,t) \in \tau_r$ which is a careful choice that ensures $m^\sigma(s,t)$ is a valid lower bound and provides equal weightage (reward) across all times in the modularized MDP for the exploring unvisited state.
In particular, for a LB, $m^\sigma(s,t)$ to be valid requires $F(V) \leq \sum_{(s,t)\in V} m^\sigma(s,t)$ for all sets $V \in \mathcal{V}$. Consider a particular case of permutation $\sigma = \{\tau, V, . . . \}$ where $V = \{(s',2) (s',6), . . . ,(s',H-1) \}$ is a set containing the same state $s'$ for all times and for simplicity let $F(\tau \cup (s',2)) - F(\tau)= F((s',2))$. In this case, if we do not divide by $H$, a valid LB is $m^{\sigma}(s',2)=F((s',\cdot))$ and $m^\sigma(s,t)=0,\forall t\neq 2$, i.e., only visiting $s'$ at horizon $t=2$ will have a reward and zero at other times. However, visiting $s'$ at $t=2$ may not be possible due to MDP constraints and visiting it at any other time is not encouraged by the rewards. Hence, we use the normalized equal reward across all times to enhance exploring the unvisited state at any time.

\newpage

\section{Experiments Details}
\label{sec:experiments_details}
\mypar{Computation of Non-Markovian policy}  We compute the optimal non-Markovian policy by solving a linear program (LP). The LP is defined with optimization variables, $\pi(a|s)$, cost as objective \eqref{eq:global_reinforcement_learning}, and constraints that $\pi(a|s)$ is a probability simplex. The cost is defined as an expectation over all the trajectories. Computing all the possible trajectories is computationally exponential in the horizon. Thus, we can compare against it only for deterministic environments with small horizons.

All experiments within Section \ref{sec:experiments} are run on a squared grid with $|\Sspace| = 400$ and action space $\Aspace = \{$left, up, down, right, stay$\}$. Each experiment is conducted over $20$ runs and the empirical standard deviation is shown. 
Here, we first report a table summarizing the configuration shared by most experiments and subsequently we will list the deviations from this configuration for a subset of the experiments.
\begin{table}[H]
\centering
\setlength{\arrayrulewidth}{0.5mm}
\setlength{\tabcolsep}{10pt}
\renewcommand{\arraystretch}{1.5}
\resizebox{8cm}{!}{
\begin{tabular}{ccc} 
  \hline
 Variable & Value  \\ 
  \hline 
env.cov\_module & Matern\\
env.alpha & $0.1$\\
env.beta & $2$\\
env.stochasticity\_degree & $0$ (deterministic), $0.1$ (stochastic)\\
env.unsafety\_penalty & $500$\\
env.n\_traj\_samples & $1$ (deterministic), $20$ (stochastic)\\
\hline
\end{tabular} 
}
\caption{Base experimental configuration.}
\end{table}

\mypar{Bayesian D-Optimal Experimental Design}~~ We have run the experiments with horizon $H=10$ for $6$ iterations of \GTO.

\mypar{Diverse Synergies}~~ We have run the experiments with horizon $H=8$ for $6$ iterations of \GPO.

\mypar{Safe State Coverage}~~ We have run the experiments with horizon $H=20$ for $25$ iterations of \GTO.

In the following, we extend Section \ref{sec:experiments} to showcase the performances of \GTO and \GPO on a wider variety of global reward functions capturing more real-world applications and representative enough to later discuss important insights.

In the following experiments, we consider a squared grid with $|\Sspace| = 100$, with action space $\Aspace = \{left, up, down, right, stay\}$, horizon $H=10$, and show the performance of running \GTO and \GPO for $15$ iterations. Each experiment is conducted over 20 runs and the empirical standard deviation is shown in the following plots. Moreover, the trajectories illustrated in figure \ref{fig:cover_1_traj} are generated using $H=31$ and $35$ iterations of \GTO. The plots for the Safe State Coverage experiment in Figure \ref{fig:safe_coverage_1} have been created with $H=20$, $25$ iterations, and stochasticity degree of $0.05$.

\mypar{States Coverage} 
In \cref{fig:cover_1}, we consider the state-coverage submodular global reward function $F(\tau) \coloneqq |\bigcup_{s\in \tau} D^s|$, with $D^s$ being a disk of size $2\times 2$ containing the agent's current state, and its right, up, and right-up neighbouring states \citep{near_optimal_safe_cov}. Notice that this global reward is fully-curved. A policy maximizing the objective $\J(\pi)$ induced by $F$ will try to explore the state space to maximize coverage according to the application-specific definition of the set $D^s$, which in practice depends on the sensors with which the agent is equipped. 
\begin{figure*}[ht]
\centering
{%
    \includegraphics[width=0.37\textwidth]{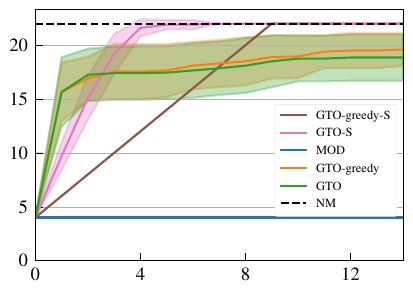}%
}%
\hspace{2cm}
{%
    \includegraphics[width=0.37\textwidth]{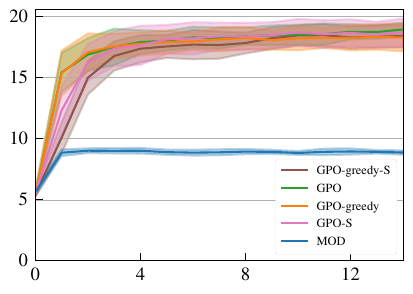}%
}%
\caption[Short Caption]{States Coverage: (left) values of $F(\tau)$ in deterministic GMDP setting where $\tau$ is the trajectory computed by \GTO at each iteration (x-axis), which matches the optimal non-Markovian policy. (right) values of $\J(\pi)$ in stochastic GMDP setting, where $\pi$ is the policy computed by \GPO at each iteration (x-axis).}
\label{fig:cover_1}
\end{figure*}

\begin{figure*}[ht]
\centering
{%
    \includegraphics[width=0.4\textwidth]{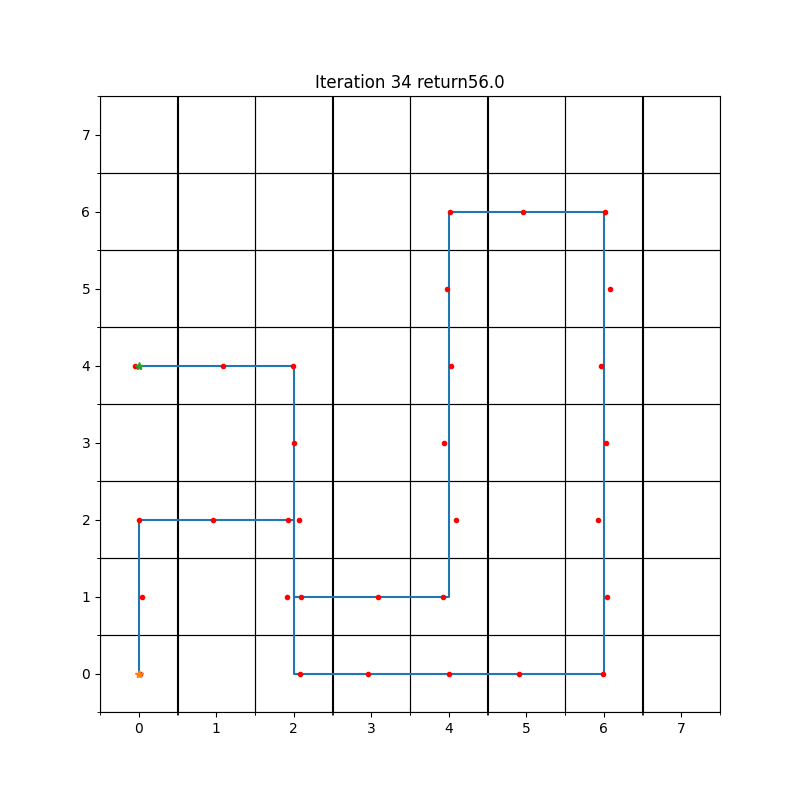}%
}%
\hspace{2cm}
{%
    \includegraphics[width=0.4\textwidth]{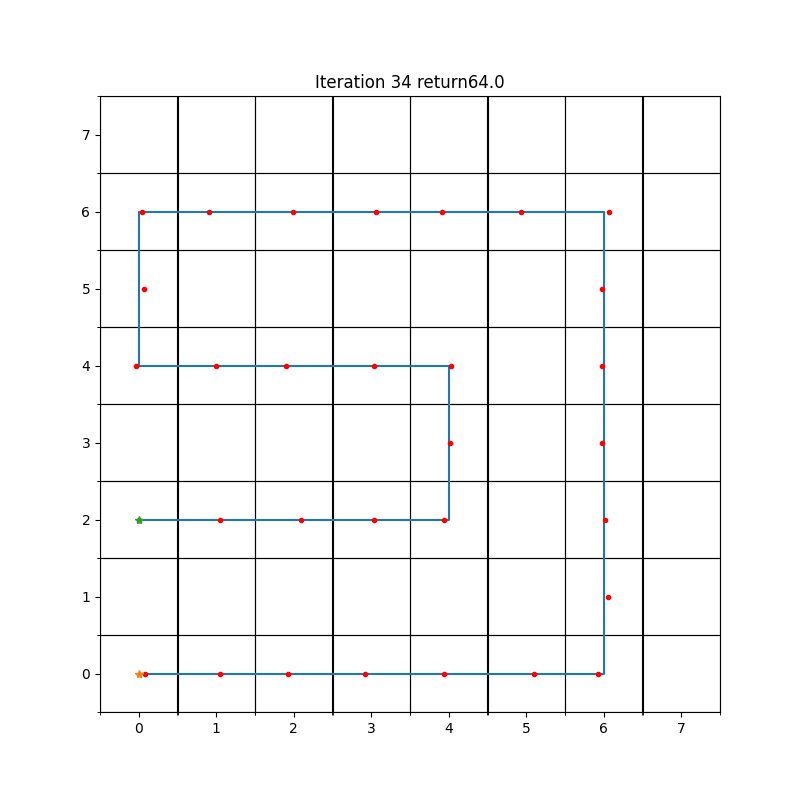}%
}%
\caption[Short Caption]{State Coverage, $H=31, 35$ iterations: (left) trajectory $\tau_1$ induced by output policy of \GTO using GTO-S lower bounds achieves $F(\tau_1) = 56$, (right) trajectory $\tau_2$ induced by output policy of \GTO using GTO-greedy-S lower bounds achieves $F(\tau_2) = 64$. GTO-greedy-S outperforms GTO-S in those instances where the horizon is just enough to reach optimality.}
\label{fig:cover_1_traj}
\end{figure*}

\mypar{Bounded Curvature Coverage} The notion of coverage can often be captured via a bounded-curvature submodular global reward, which we denote as \emph{bounded curvature coverage}. It can be expressed as $F(\tau) = \sum_{s \in \Sspace} \phi(\tau, s)$ where $\phi(\tau, s) = \mathbb{I}_{C(\tau,s)>0}\cdot [1-\alpha(C(\tau,s)-1)]$ and $C(\tau,s) \coloneqq |\{ t \in [H] : (s,t)\in \tau\}|$. Similar to a classic coverage function presented above, this function value is increased by $1$ once a state is visited for the first time, while it increases by an arbitrary value $\alpha$ when a state is visited again. Interestingly, one can prove that the submodular curvature of $F$ has value $k_F = 1-\alpha$. In the following plot, we consider a significantly curved instance, where $\alpha = 0.9$. 

\begin{figure*}[ht]
\centering
{%
    \includegraphics[width=0.37\textwidth]{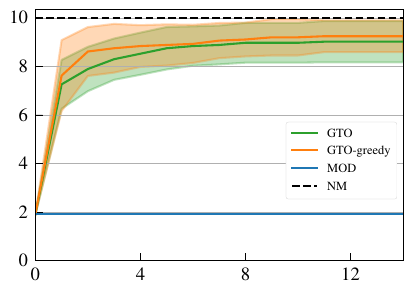}%
}%
\hspace{2cm}
{%
    \includegraphics[width=0.37\textwidth]{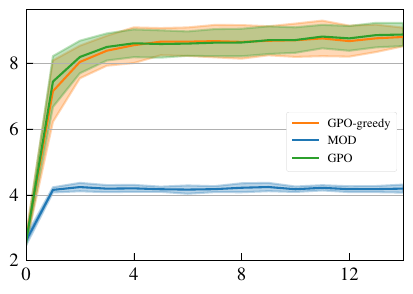}%
}%
\caption[Short Caption]{Bounded Curvature Coverage: (left) values of $F(\tau)$ in deterministic GMDP setting where $\tau$ is the trajectory computed by \GTO at each iteration (x-axis), (right) values of $\J(\pi)$ in stochastic GMDP setting, where $\pi$ is the policy computed by \GPO at each iteration (x-axis). The left plot shows that in practice $\tau$ is nearly-optimal \wrt the optimal non-Markovian policy.}
\label{fig:bounded_curvature_coverage}
\end{figure*}
\mypar{D-Optimal Experimental Design}
Here we consider the optimal experimental design setting as introduced in Section \ref{sec:experiments}.
To ensure robustness in our findings, we conduct 20 experiments across 4 different environments.
\begin{figure*}[ht]
\centering
{%
    \includegraphics[width=0.37\textwidth]{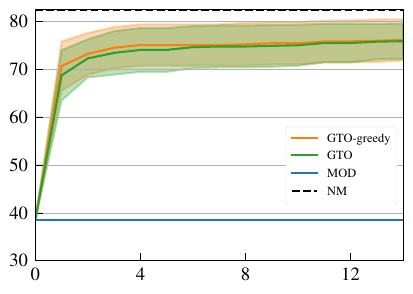}%
}%
\hspace{2cm}
{%
    \includegraphics[width=0.37\textwidth]{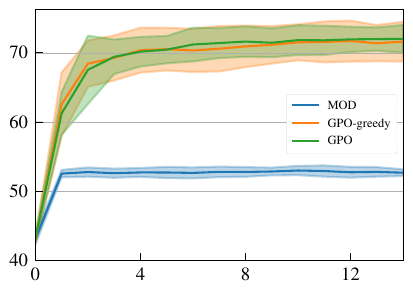}%
}%
\caption[Short Caption]{D-Optimal Experimental Design: (left) values of $F(\tau)$ in deterministic GMDP setting where $\tau$ is the trajectory computed by \GTO at each iteration (x-axis), (right) values of $\J(\pi)$ in stochastic GMDP setting, where $\pi$ is the policy computed by \GPO at each iteration (x-axis). \GTO and \GPO perform nearly optimally in both cases.}
\label{fig:d_experimental_design}
\end{figure*}

\mypar{Synergical Trajectory Selection}
As previously mentioned, in the context of scientific discovery applications, it could be particularly relevant to model positive interactions or synergies among state within a certain trajectory. As an illustrative example, consider states representing atoms and trajectories encoding molecules. Certain combinations \eg pairs, triplets etc., of states \ie atom, can have a synergistic effect that can be captured via supermodular global reward functions. In figure \ref{fig:additive_synergies}, we consider the supermodular global reward function defined as $F(\tau) \coloneqq \sum_{i=1}^K |\tau \cap S_i|^\beta$ with $S_i \subseteq V \coloneqq \Sspace \times T$ indicating a synergy, which we see as a subset of $V$ capturing complementary among its elements.
\begin{figure*}[ht]
\centering
{%
    \includegraphics[width=0.37\textwidth]{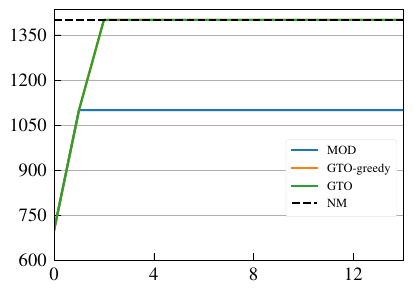}%
}%
\hspace{2cm}
{%
    \includegraphics[width=0.37\textwidth]{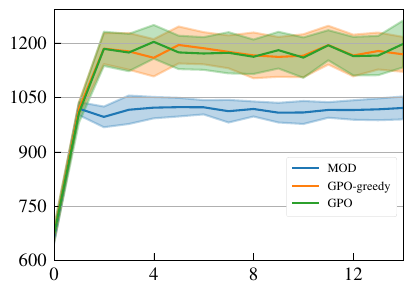}%
}%
\caption[Short Caption]{Synergical Trajectory Selection: (left) values of $F(\tau)$ in deterministic GMDP setting where $\tau$ is the trajectory computed by \GTO at each iteration (x-axis), (right) values of $\J(\pi)$ in stochastic GMDP setting, where $\pi$ is the policy computed by \GPO at each iteration (x-axis). In (left) $\tau$ matches the optimal non-Markovian policy.}
\label{fig:additive_synergies}
\end{figure*}

\mypar{Diverse and Synergical Trajectory Selection}
Interestingly, the notions of exploration mentioned above and encoded through submodularity can be mixed with notions of complementary among states within the same trajectory. This leads to BP objectives such as $F(\tau) = |\bigcup_{s\in \tau} D^s| + \sum_{i=1}^K |\tau \cap S_i|^\beta$ with $S_i \subseteq V \coloneqq \Sspace \times T$. This objective induces policies maximizing state space covering while seeking complementarity between states within the trajectory. The performances of \GTO and \GPO on this global reward are illustrated in figure \ref{fig:diverse_synergies}. We believe that objectives of this type can be particularly relevant in the context of computational chemistry, where often a scientist wishes to discovery chemical compounds that show a certain diversity and complementarity among its elements at the same time.

\begin{figure*}[ht]
\centering
{%
    \includegraphics[width=0.37\textwidth]{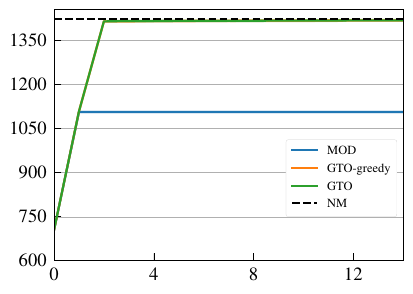}%
}%
\hspace{2cm}
{%
    \includegraphics[width=0.37\textwidth]{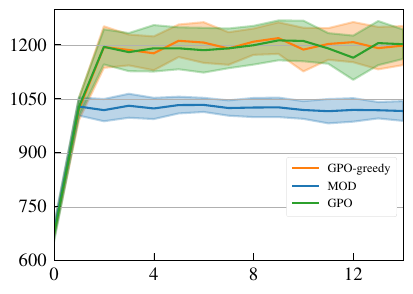}%
}%
\caption[Short Caption]{Diverse and Synergical Trajectory Selection: (left) values of $F(\tau)$ in deterministic GMDP setting where $\tau$ is the trajectory computed by \GTO at each iteration (x-axis), (right) values of $\J(\pi)$ in stochastic GMDP setting, where $\pi$ is the policy computed by \GPO at each iteration (x-axis). From (left) we can deduce that $\tau$ can properly trade-off diversity and complementary and match the optimal non-Markovian policy.}
\label{fig:diverse_synergies}
\end{figure*}

\mypar{Safe States Coverage} Here we consider the notion of safe state coverage as introduced in Section \ref{sec:experiments}. As illustrated in figure \ref{fig:safe_coverage_2}, an optimal policy \wrt to this objective is highly explorative while avoiding unsafe areas. Notice that the concept of safety is captured via a penalty term which can be arbitrary calibrated \wrt the maximum value of the submodular component. Nonetheless, in order to guarantee the satisfiability of a safety constraint one would have to express the global reward as a Lagrangian and compute the optimal Lagrangian multiplier by an outer optimization scheme. This procedure, which we leave as future work, seems particularly viable and may lead to high probability guarantees on safety satisfiability.
\begin{figure*}[ht]
\centering
{%
    \includegraphics[width=0.37\textwidth]{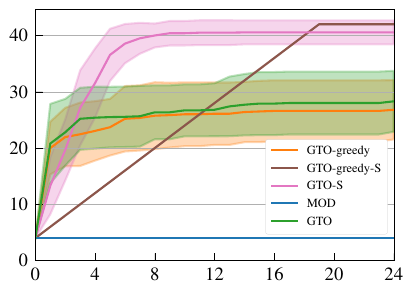}%
}%
\hspace{2cm}
{%
    \includegraphics[width=0.37\textwidth]{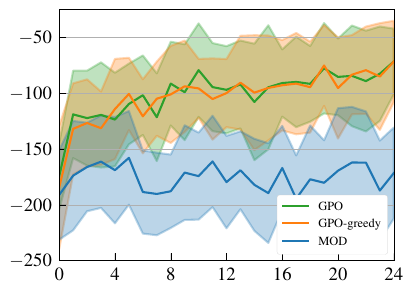}%
}%
\caption[Short Caption]{Safe States Coverage: (left) values of $F(\tau)$ in deterministic GMDP setting where $\tau$ is the trajectory computed by \GTO at each iteration (x-axis), (right) values of $\J(\pi)$ in stochastic GMDP setting, where $\pi$ is the policy computed by \GPO at each iteration (x-axis). Negative values in (right) are due to high unsafety penalty and unavoidable possibility of visiting unsafe states, see figure \ref{fig:safe_coverage_1}.}
\label{fig:safe_coverage_1}
\end{figure*}

\begin{figure*}[ht]
\centering
{%
    \includegraphics[width=0.4\textwidth]{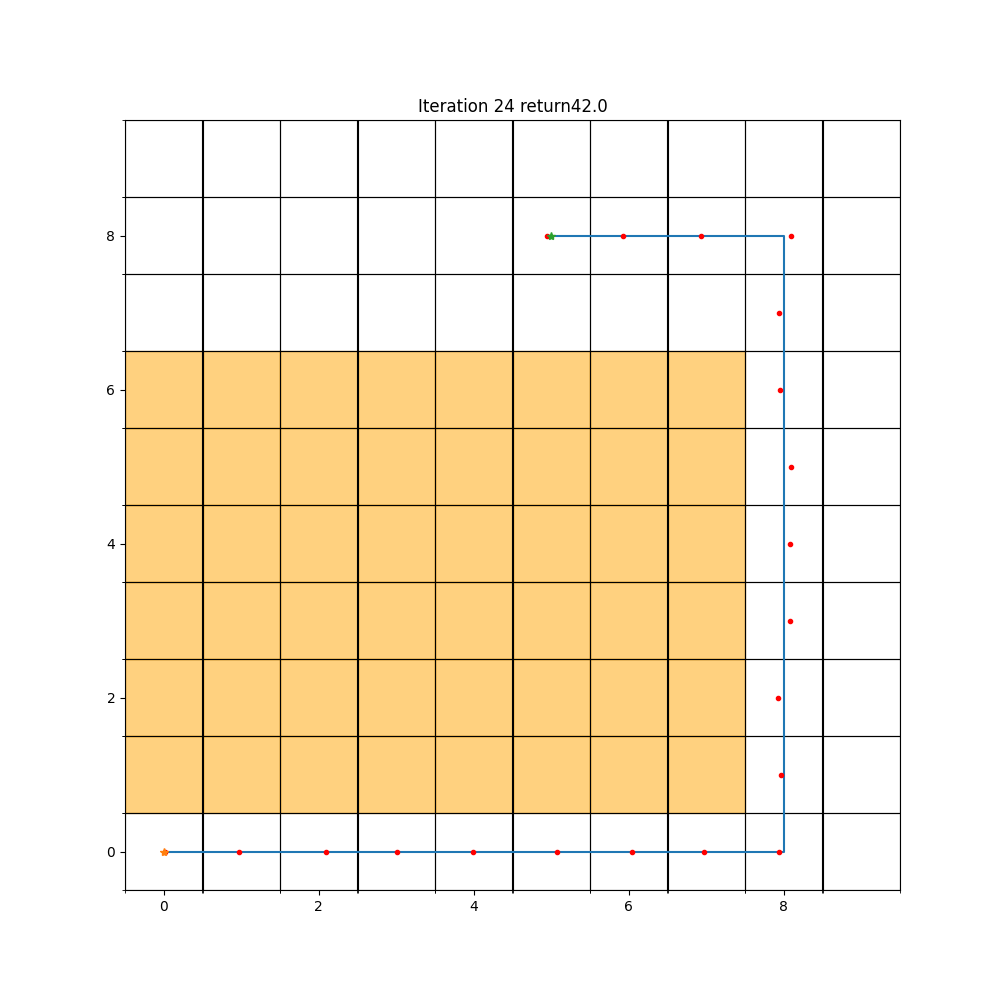}%
}%
\hspace{2cm}
{%
    \includegraphics[width=0.4\textwidth]{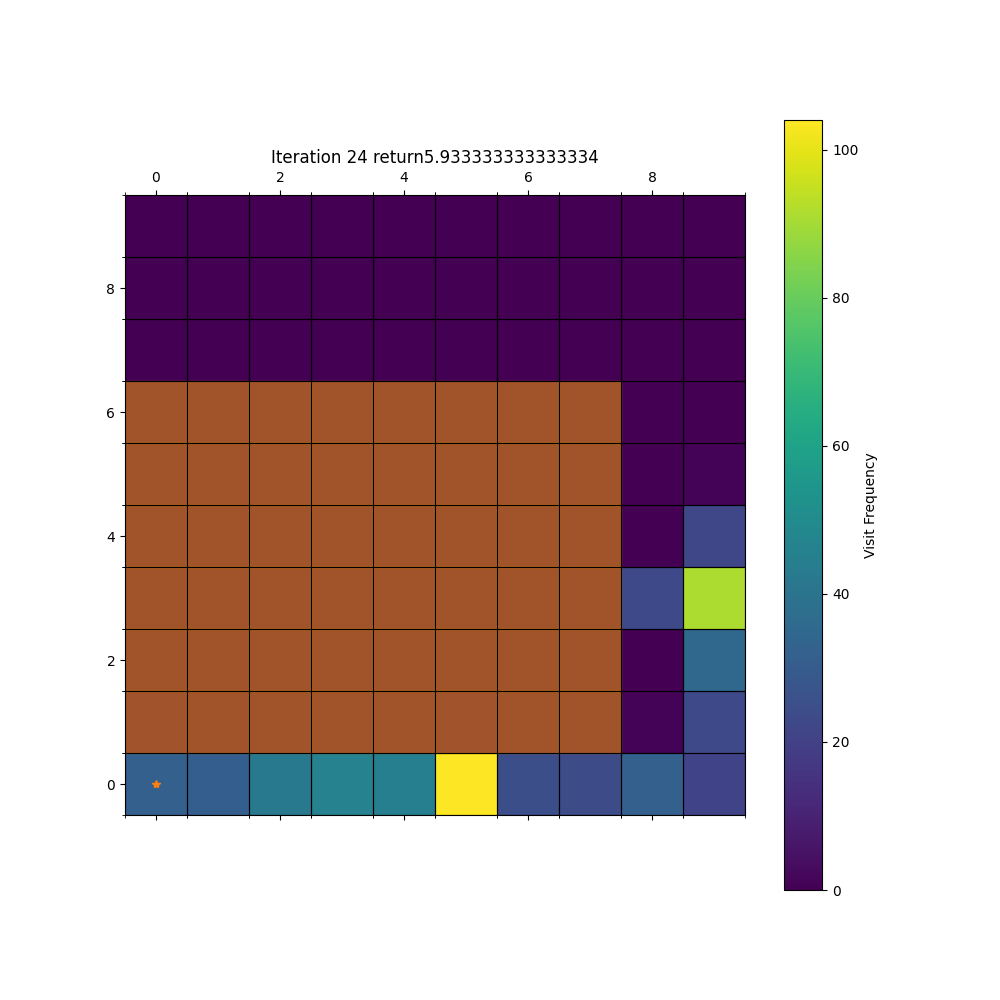}%
}%
\caption[Short Caption]{Safe States Coverage $H=20, 25$ iterations: (left) trajectory $\tau$  computed by \GTO at last iteration, (right) empirical distribution (blue = low probability, yellow = high probability) over the state space induced by the policy $\pi$ computed by \GPO at each iteration (x-axis). The initial state is the bottom left state of the grid. }
\label{fig:safe_coverage_2}
\end{figure*}

\end{appendix}
\end{document}